\newtheorem{theorem}{Theorem}
\newtheorem{lemma}{Lemma}
\theoremstyle{definition}
\newtheorem{definition}{Definition}
\newtheorem{assumption}{Assumption}
\newtheorem{remark}{Remark}
\DeclareMathOperator{\VCdim}{VC}
\DeclareMathOperator{\loss}{\mathcal{\ell}_{0-1}}
\DeclareMathOperator*{\argmax}{arg\,max}
\DeclareMathOperator{\define}{\coloneqq}
\DeclareMathOperator{\F}{\mathcal{F}}
\title{Learnability of Competitive Threshold Models}
\author{
Yifan Wang
\And
Guangmo Tong\\
\affiliations
Department of Computer and Information Sciences, University of Delaware, USA\\
\emails
\{yifanw, amotong\}@udel.edu
}
\begin{document}
\maketitle

\begin{abstract}
Modeling the spread of social contagions is central to various applications in social computing. In this paper, we study the learnability of the competitive threshold model from a theoretical perspective. We demonstrate how competitive threshold models can be seamlessly simulated by artificial neural networks with finite VC dimensions, which enables analytical sample complexity and generalization bounds. Based on the proposed hypothesis space, we design efficient algorithms under the empirical risk minimization scheme. The theoretical insights are finally translated into practical and explainable modeling methods, the effectiveness of which is verified through a sanity check over a few synthetic and real datasets. The experimental results promisingly show that our method enjoys a decent performance without using excessive data points, outperforming off-the-shelf methods.

\end{abstract}

\section{Introduction}

Social contagion phenomena, such as the propagation of behavior patterns or the adoption of new technologies, have attracted huge research interests in various fields (e.g., sociology, psychology, epidemiology, and network science \cite{cozzo2013contact,Goldstone2005ComputationalMO,camacho2020four}). Formal methods for modeling social contagion are crucial to many applications, for example, the recommendation and advertising in viral marketing \cite{kempe2003maximizing,tong2018coupon} and misinformation detection \cite{budak2011limiting,tong2020stratlearner}. Given the operational diffusion models, there is a fundamental question regarding the learnability of the models: to which extent can we learn such models from data?. In this paper, we study the classic threshold models in which the core hypothesis is that the total influence from active friends determines the contagion per a threshold. We seek to derive its PAC learnability and sample complexity by which efficient learning algorithms are possible.

\paragraph{Motivation and related works.} Learning contagion models from data has been a central topic in social computing. One research branch focuses on estimating the total influence resulting from an initial status \cite{li2017deepcas,gomez2016influence,tang2009social}; in contrast, our learning task attempts to predict the diffusion status for each node. For node status prediction, it has been proved that deep learning methods are promising when rich features are available (e.g., \cite{qiu2018deepinf,leung2019personalized}), but their generalization performance in theory is often unclear. Recent works have demonstrated the sample complexity of influence learning for single-cascade models \cite{he2016learning,du2014influence}, where certain technical
conditions are required and the approach therein is different from ours in a substantial way. Our research is inspired by the recent works that show the possibility of simulating single-cascade contagion models through neural networks \cite{narasimhan2015learnability,adiga2019pac}. Departing from the common surrogate modeling methods where the learned mappings are less transparent, the pursued idea seeks to design neural networks with their layer structures mimicking real networks. In this paper, we take a step towards the case of general threshold models, without limiting the number of cascades. Comparing to the single-cascade case where threshold layers are sufficient, multi-cascade diffusion requires more sophisticated designs in order to properly propagate influence summation between layers without losing the information of cascade identity.

\paragraph{Contribution.} Our work can be summarized as follows.

\begin{itemize}
    \item \textbf{Model design.} For a finite-precision system, we present elementary designs showing that the general threshold model can be simulated explicitly by neural networks with piecewise polynomial activation functions. 
    \item \textbf{Theoretical analysis.} 
    Based on our hypothesis design, we prove that the general threshold model is PAC learnable, and design efficient empirical risk minimization schemes with a provable sample complexity.
    \item \textbf{Empirical studies.} We experimentally examine the proposed methods over synthetic and real datasets. The results suggest that methods based on explicit simulation outperform off-the-shelf learning methods by an evidence margin. They also verify that our methods work in the way it supposed to.
\end{itemize}
The proofs, full experimental results, and source code are provided in the supplementary material\footnote{\url{https://github.com/cdslabamotong/LTInfLearning}}.

\newpage
\section{Preliminaries}

\subsection{Competitive linear threshold model}
We follow the standard competitive linear threshold (CLT) model \cite{borodin2010threshold}, which has been widely adopted in the formal study of social contagions \cite{he2012influence,tzoumas2012game,bozorgi2017community}. A social network is represented as a directed graph $G=(V, E)$, where $V=\{v_1,...,v_{N}\}$ is the set of nodes and $E$ is the set of edges, with $N = |V|$ and $M = |E|$ being the numbers of nodes and edges, respectively. For each node $v \in V$, we denote by $N(v)\subseteq V$ the set of its in-neighbors. 

We consider the scenario where there are $S\in \mathbb{Z}^+$ information cascades, each of which starts to spread from their seed set $\psi_0^s \subseteq V$ for $s \in [S]$. Associated with each node $v_i$ and each cascade $s$, there is a threshold $\theta_i^s \in [0, 1]$; associated with each edge $(v_i, v_j)\in E$ and each cascade $s$, there is a weight $w_{i,j}^s \in [0, 1]$. Without loss of generality, we assume that the seed sets are disjoint and the weights are normalized such that $\sum_{v_j \in N(v_i)} w_{(j,i)}^{s}\leq 1$ for each node $v_i \in V$ and cascade $s$. The nodes are initially inactive and can be $s$-active if activated by cascade $s$. In particular, the diffusion process unfolds step by step as follows:

\begin{itemize}

    \item \textbf{Step $\mathbf{0}$:}  For each cascade $s \in [S]$, nodes in $\psi_0^s$ become $s$-active.

    \item \textbf{Step $\mathbf{t>0}$:} Let $\psi_{t-1}^s \subseteq V$ be the set of $s$-active nodes after $t-1$ step. There are two phases in step $t$.  
    
    \begin{itemize}
        \item \textbf{Phase 1}: For an inactive node $v_i \in V$, let $\zeta^s_i$ be the summation of the weights from its $s$-active in-neighbors:
        \begin{align}
        \zeta^s_i= \sum_{v_j \in N(v_i) \bigcap \psi^{s}_{t-1}} w_{j,i}^{s}.
        \end{align}
        The node $v_i$ is activated by cascade $s$ if and only if $\zeta_i^s \geq \theta_{i}^{s}$. After phase 1, it is possible that a node can be activated by more than one cascades.
        
        \item \textbf{Phase 2}: For each node $v_i$, it will be finally activated by the cascade $s^*$ with the largest weight summation:
        \begin{align}
        s^* = \argmax _{s: \zeta^i_s \geq \theta^i_s} \zeta_i^s.    
        \end{align}
        For tie-breaking, cascades with small indices are preferred.

    \end{itemize}

\end{itemize}
\begin{figure}
    \centering
    \includegraphics[width = 0.45\textwidth]{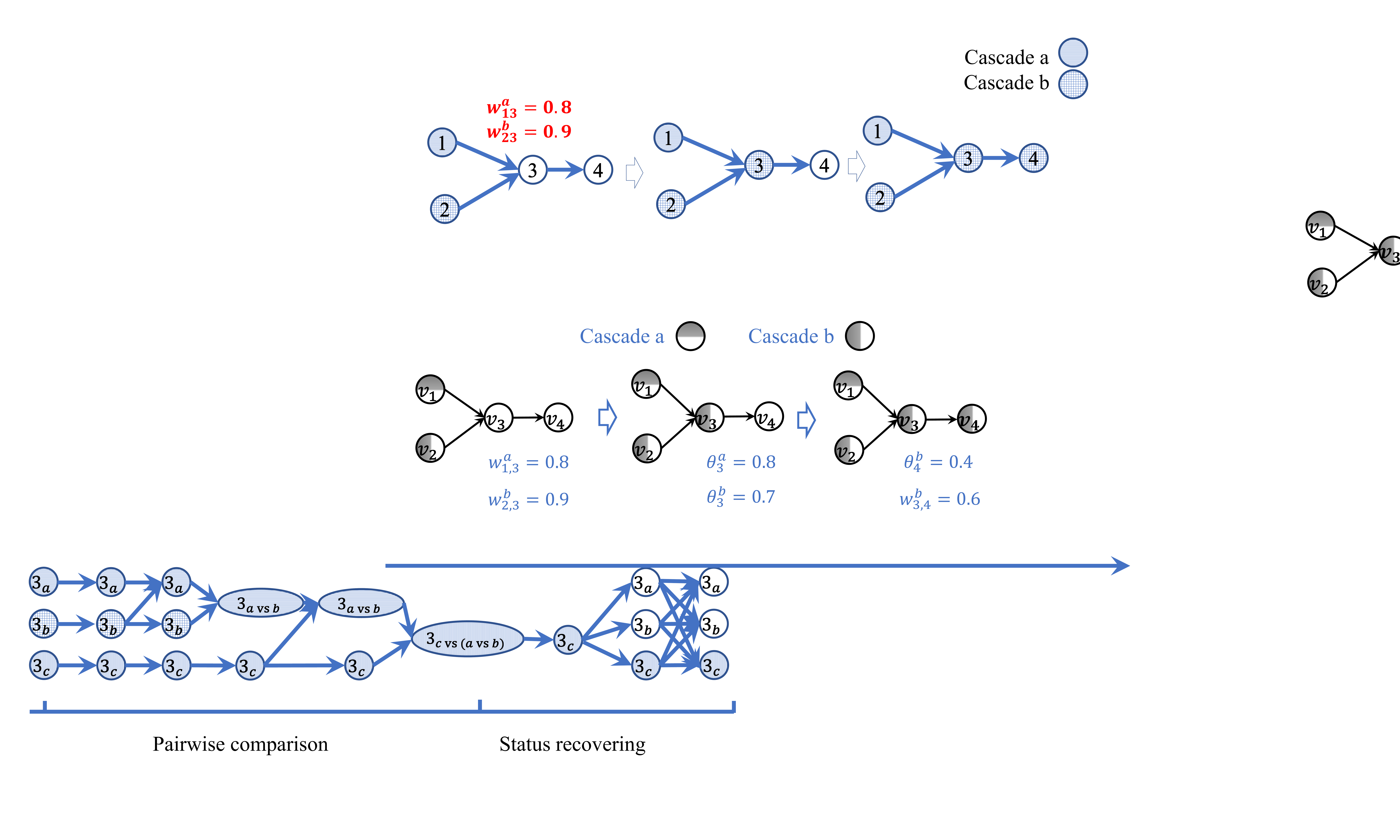}
    \caption{Diffusion process example. In this example, we have two cascades with seed sets $\{v_2\}$ and $\{v_1\}$. The weights and thresholds are given as graph shows. According to the CLT model, $v_3$ will become $1$-active after one time step. $v_4$ will become $1$-active at the end of diffusion process.}
    \label{fig:CLT model}
    \vspace{-2mm}
\end{figure}

Clearly, there are at most $n$ diffusion steps, and without loss of generality, we may assume that there are always $n$ diffusion steps. An illustration is given in Figure \ref{fig:CLT model}. The following notations are useful for formal discussions.

\begin{definition}

We use a binary matrix $\mathbf{I}_{0} \in \{0, 1\}^{N\times S}$ to denote the initial status, where $\mathbf{I}_{0, i, s}=1$ if and only if node $v_i$ is in the seed set $\psi_0^s$ of cascade $s$. For a time step $t>0$, the diffusion status is denoted by a binary tensor $\mathbf{I}_{t} \in \{0,1\}^{N \times S \times 2}$, where, for a cascade $s$, $\mathbf{I}_{t, i, s, p} = 1$ if and only if node $v_i$ is $s$-active after phase $p \in \{1,2\}$ in time step $t$.

\end{definition}

We are interested in the learnability of the influence function of CLT models.

\begin{definition}
Given a CLT model, the influence function $F: \{0,1\}^{N \times S} \mapsto \{0,1\}^{N \times S}$ maps from the initial status $\mathbf{I}_0$ to the final status $\mathbf{I}_{N,:,:,2}$.
\end{definition}

\subsection{Learning settings}

Assuming that the social graph $G$ is unknown to us, we seek to learn the influence function using a collection $D$ of $K\in \mathbb{Z}^+$ pairs of initial status and corresponding diffusion statuses:
\begin{align}
D=\Big\{(\mathbf{I}_0^1, \mathbf{I}_{1:N, :, :, 2}^1),...,(\mathbf{I}_0^K, \mathbf{I}_{1: N, :, :, 1: 2}^K)\Big\}
\end{align}
where $\mathbf{I}_{1:N, :, :, 1:2}^k$ denotes the diffusion status after each phase in each diffusion step:
$$\mathbf{I}_{1:N, :, :, 2}^k=\Big[\mathbf{I}_{1,:,:,1}^k, \mathbf{I}_{1,:,:,2}^k,\mathbf{I}_{2,:,:,1}^k, \mathbf{I}_{2,:,:,2}^k,...,\mathbf{I}_{N,:,:,1}^k, \mathbf{I}_{N,:,:,2}^k\Big].$$ 
We use the zero-one loss $\loss$ to measure the difference between the prediction $\hat{\mathbf{I}}_{N,:,:,2}$ and the ground truth $\mathbf{I}_{N,:, :, 2}$, which is defined as
\begin{flalign*}
&\loss(\mathbf{I}_{N, :, :, 2}, \hat{\mathbf{I}}_{N,:, :, 2}) \define \hspace{4.5cm} \\
    &\hspace{2cm}\frac{1}{2NS} \sum _{i \in [N]}\sum_{s \in [S]} \mathds{1}\big [\mathbf{I}_{N,i,s,2} \neq \hat{\mathbf{I}}_{N,i,s,2}\big]
\end{flalign*}
For a distribution $\mathcal{D}$ over the input $\mathbf{I}_0$ and a CLT model generating the training data $D$, we wish to leverage $D$ to learn a mapping $\hat{F}: \{0,1\}^{N \times S} \mapsto \{0,1\}^{N \times S}$ such that the generalization loss can be minimized:
\begin{equation}\label{equ:gerror}
L(\hat{F}) \define \mathbb{E}_{\mathbf{I}_0  \sim \mathcal{D} }\Big[\loss\big({F}(\mathbf{I}_0), \hat{F}(\mathbf{I}_0)\big)\Big].
\end{equation}

\section{Learnability analysis}

In this section, we present a PAC learnability analysis for the influence function. The following assumptions will be made throughout the analysis. 

\begin{assumption}{\label{assum: finite}}
Following the conventions regarding finite precision system \cite{holi1993finite,weiss2018practical,dembo1989complexity}, we assume that the diffusion model is a decimal system of a finite precision: for the involved parameters (i.e., $w_{i,j}^s$ and $\theta_i^s$), number of their decimal places is upper bounded by a constant $Q \in \mathbb{Z}^+$. 
\end{assumption}

\begin{assumption}{\label{s}}
The cascade number $S = 2^Z$ is a power of $2$ for some constant $Z\in \mathbb{Z}^+$. 

\end{assumption} 

\begin{figure*}[!t]
    \centering
    \includegraphics[width = 0.98\textwidth]{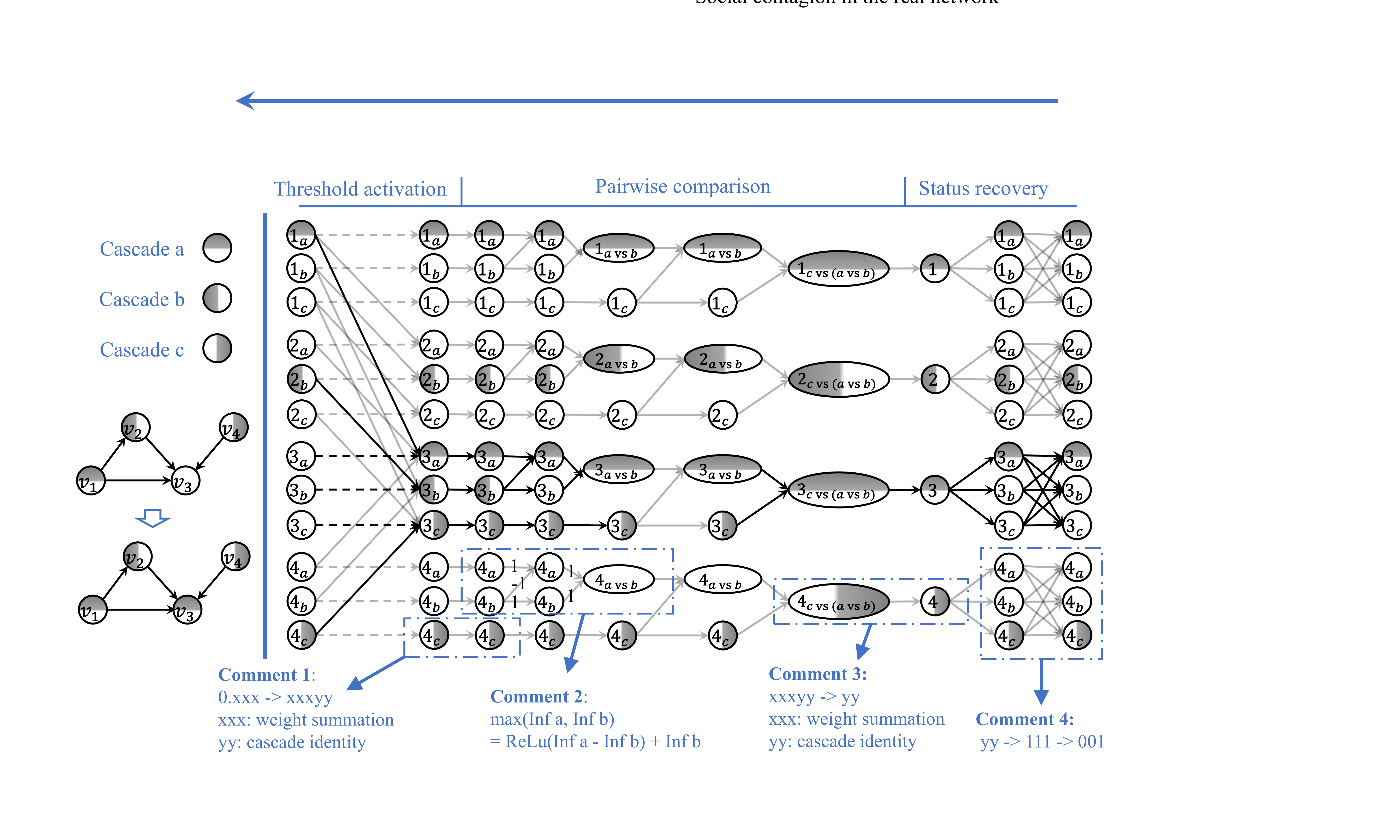}    \caption{\textbf{One-step Simulation.} The left part depicts the scenario where the node $v_3$ is activated by three cascades among which cascade $a$ finally wins the competition. The right parts demonstrates how to simulate such a scenario through three our design; the computation path for node $v_3$ is highlighted. }
    \label{fig:one_step}
    \vspace{-4mm}
\end{figure*}

The overall idea is to design a realizable hypothesis with a finite VC dimension by which the PAC learnability can be established. In order to obtain a realizable hypothesis space, we seek to explicitly simulate the diffusion function through neural networks. While the universal approximation theory ensures that such neural networks exist, we wish for elementary designs of a polynomial size so as to admit efficient learning algorithms. To this end, we first demonstrate how to simulate the one-step diffusion under the CLT model, which can be used repeatedly to simulate the entire diffusion. 
\subsection{Realizable hypothesis space}


For the one-step diffusion, its simulation is done through two groups of layers that are used in turn to simulate phase 1 and phase 2. As illustrated in Figure \ref{fig:one_step}, the simulation starts by receiving the diffusion status $\mathbf{H}^{1}\in \{0, 1\}^{N \cdot S}$ after the last step, where for $i \in [N]$ and $s \in [S]$, $\mathbf{H}^{1}_{(i-1)\cdot S+s} =1$ if and only if node $v_i$ is $s$-active. After $2Z +4$ layers of transformations, it ends by outputting a vector $\mathbf{H}^{2Z+6}\in \{0, 1\}^{N \cdot S}$, which corresponds exactly to the diffusion status after one diffusion step under the CLT model. The layer transformations all follow the canonical form:
\begin{equation}
    \mathbf{H}^{l+1} = \sigma^{l+1}(\mathbf{W}^{l}\mathbf{H}^{l})
\end{equation}
with $\mathbf{W}^l$ being the weight matrix and $\sigma^{l}$ being a collection of activation functions over the elements of $\mathbf{W}^{l}\mathbf{H}^{l}$. In the rest of this subsection, we will present our designs of the weight matrices and the activation functions that can fulfill the goal of simulation.

\subsubsection{Phase 1}
The single-cascade activation can be simulated by
\begin{align}
\label{eq: H1_H2}
\mathbf{H}^{2} = \sigma^{1}(\mathbf{W}^{1}\mathbf{H}^{1}),  
\end{align}
where for each $i, j \in [N]$ and $s_1, s_2 \in [S]$, we have
\begin{flalign*}
&\mathbf{W}^{1}_{(i-1)\cdot S+s_1, (j-1)\cdot S+s_2} \define \hspace{4.5cm} \\
    &\hspace{2cm}\begin{cases} 
      w_{i,j}^{s_1} & \text{if $(v_i, v_j)\in E$ and $s_1=s_2$} \\
      1 & \text{if $i=j$ and $s_1=s_2$}\\
      0 & \text{otherwise} 
    \end{cases},
\end{flalign*}
and for each $i \in [N]$ and $s\in [S]$, we have 
\begin{equation}
    \sigma ^{1}_{(i-1)\cdot S+s}(x) \define
    \begin{cases} 
      x & \text{if $x\geq\theta_{i}^{s}$} \\
      0 & \text{Otherwise} 
    \end{cases}.
    \end{equation}
For each  $i \in [N]$ and $s\in [S]$, the resulting $\mathbf{H}^{2}_{(i-1)\cdot S+s}$ is equal to the influence summation of cascade $s$ at node $v_i$ if $v_i$ is activated by cascade $s$, and otherwise zero.        
 
\subsubsection{Phase 2}
In simulating phase 2, the general goal is to figure out among the candidate cascades which one wins the competition. Intuitively, this can be implemented through a hierarchy of pairwise comparisons. One technical challenge is that comparison made directly by linear functions tells only the difference between the summations, while we need to keep the summation for future comparisons; we observe that this problem can be solved by using two consecutive layers of piecewise linear functions. A more serious problem is that simply passing the largest influence summation to subsequent layers incurs the loss of the information about the cascade identity (i.e., index), making it impossible to identify the node status -- which is required as the input to simulating the next diffusion step. We propose to address such a challenge by appending the cascade identity to the weight summation as the lowest bits. The total shifted bits are determined by the system precision (Assumption \ref{assum: finite}). Doing so does not affect the comparison result while allowing us to retrieve the cascade identity by the scale and modulo operations.  In particular, two groups of transformations are employed to achieve the pairwise comparison and status recovery. 

\textbf{Pairwise comparison.} With the input $\mathbf{H}^2$ from phase 1, we encode the cascade identity through
\begin{align}
\mathbf{H}^{3} = \sigma^{2}(\mathbf{W}^{2}\mathbf{H}^{2}),  
\end{align}
where for each $i \in [N\cdot S]$ and $j \in [N \cdot S]$, we have
\begin{equation}
\mathbf{W}^2_{i, j} \define
\begin{cases} 
10^{(Q+  \lfloor{\log_{10} S}  \rfloor+1)} & \text{if $i = j$} \\
0 & \text{otherwise} 
\end{cases},
\end{equation}
and for each $i \in [N]$ and $s \in [S]$, we have 
\begin{equation}
\sigma ^{2}_{(i-1)\cdot S+s}(x) = x + s.
\end{equation}
The design of $\mathbf{W}^2$ ensures that sufficient bits are available at the low side, and the activation function $\sigma ^{2}$ then writes the cascade identity into the empty bits (Comment 1 in Figure \ref{fig:one_step}). 

Using the information in $\mathbf{H}^3$, for a node $v_i \in V$, the largest influence summation can be determined by the sub-array 
$$\Big[\mathbf{H}^3_{(i-1)\cdot S+1}, \mathbf{H}^3_{(i-1)\cdot S+2},..., \mathbf{H}^3_{(i-1)\cdot S+S}\Big]$$ through pairwise comparisons. For the comparison between two cascades $s_1$ and $s_2 $  at node $v_i$ ($s_1 < s_2$), as illustrated in Comment 2 in Figure \ref{fig:one_step}, the difference $$\mathbf{H}^3_{(i-1)\cdot S+s_1}-\mathbf{H}^3_{(i-1)\cdot S+s_2}$$ is first fed into a ReLu function of which the result is then added by $\mathbf{H}^3_{(i-1)\cdot S+s_2}$. We can verify that this returns exactly $$\max(\mathbf{H}^3_{(i-1)\cdot S+s_1}, \mathbf{H}^3_{(i-1)\cdot S+s_2}).$$ Therefore, two layers are used to eliminate a half of the candidate cascades, thereby resulting in $2Z$ layers in total. The output of this part is $\mathbf{H}^{2Z+3} \in \mathbb{R}^N$ in which we have
\begin{align*}
\mathbf{H}^{2Z+3}_i = \max\Big(\mathbf{H}^3_{(i-1)\cdot S+1}, \mathbf{H}^3_{(i-1)\cdot S+2},..., \mathbf{H}^3_{(i-1)\cdot S+S}\Big)
\end{align*}
for each $i \in [N]$. 

Notably, the lowest $Q$ bits in $\mathbf{H}^{2Z+3}$ stores the cascade index, which can be recovered by 
\begin{align}
\mathbf{H}^{2Z+4} = \sigma^{2Z+3}(\mathbf{W}^{2Z+3}\mathbf{H}^{2Z+3}),  
\end{align}
where $\mathbf{W}^{2Z+3}$ is the identity matrix and we have
\begin{equation}
\sigma^{2Z+3}(x) \colon= x  \mod 10^{ \lfloor{\log_{10} S}  \rfloor+1},
\end{equation}
which is shown in Comment 3 in Figure \ref{fig:one_step}. 

Finally, the cascade indices in $\mathbf{H}^{2Z+4}$ are converted to binary indicators through two layers. The first layer transforms $\mathbf{H}^{2Z+4}$ into a binary vector in $\{0,1\}^{N\cdot S}$ by
\begin{align}
\mathbf{H}^{2Z+5} = \sigma^{2Z+4}(\mathbf{W}^{2Z+4}\mathbf{H}^{2Z+4}),  
\end{align}
where for each $i, j \in [N]$ and $s \in [S]$, we have
\begin{equation}
\mathbf{W}^{2Z+4}_{i, (j-1)\cdot S+s} \define
\begin{cases} 
 1 & \text{if $i=j$} \\
0 & \text{otherwise} 
\end{cases},
\end{equation}
and 
\begin{equation}
\sigma^{2Z+4}_{(i-1)\cdot S+s}(x) \define 
\begin{cases} 
1 & \text{if $x \geq s$} \\
0 & \text{otherwise} 
\end{cases}.
\end{equation}
This ensures that in each group $$\Big[\mathbf{H}^{2Z+5}_{(i-1)\cdot S+1},\mathbf{H}^{2Z+5}_{(i-1)\cdot S+2}, ...,\mathbf{H}^{2Z+5}_{(i-1)\cdot S+S}\Big],$$ $\mathbf{H}^{2Z+5}_{(i-1)\cdot S+s}$ is $1$ if and only if $s$ is no larger than the index of the cascade that activates $v_i$. Finally, $\mathbf{H}^{2Z+5}$ is transformed into the form that matches exactly the diffusion status after ones-step diffusion, which can be achieved through
\begin{align}
\mathbf{H}^{2Z+6} = \sigma^{2Z+5}(\mathbf{W}^{2Z+5}\mathbf{H}^{2Z+5}),  
\end{align}
where for each $i, j \in [N]$ and $s_1, s_2 \in [S]$, we have
\begin{equation}
\mathbf{W}^{2Z+5}_{(i-1)\cdot S+s_1, (j-1)\cdot S+s_2} \define
\begin{cases} 
 1 & \text{if $i=j$ and $s_1 \leq s_2$} \\
  -1 & \text{if $i=j$ and $s_1 = s_2$} \\
0 & \text{otherwise} 
\end{cases},
\end{equation}
and 
\begin{equation}
\sigma^{2Z+5}_{(i-1)\cdot S+s}(x) \define 
\begin{cases} 
1 & \text{if $x\geq s$} \\
0 & \text{otherwise} 
\end{cases}.
\end{equation}
The last two layers are illustrated in Comment 4 in Figure \ref{fig:one_step}.

\begin{lemma}{\label{lemma:clt_to_nn}}
One-step diffusion in a CLT model can be simulated by a feed-forward neural network composed of $O(Z)$ layers, with $O(S (N+M))$ adjustable weights and $O(N \cdot S)$ piecewise linear computation units each of which with $O(2^Q)$ pieces.
\end{lemma}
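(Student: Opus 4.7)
The lemma is essentially a summary of the construction already sketched in the subsection, so my proof would be a systematic verification rather than a discovery. The plan is to proceed in two stages: first establish functional correctness layer by layer, then tally the resources (depth, weights, activation pieces).

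For correctness, I would begin with Phase 1 and check that $\mathbf{H}^{2}_{(i-1)S+s}$ equals $\zeta_i^s$ whenever $\zeta_i^s \geq \theta_i^s$ and is $0$ otherwise, which follows directly from the definitions of $\mathbf{W}^{1}$ and $\sigma^{1}$ together with the requirement $\sum_j w_{j,i}^s \leq 1$ so no $\zeta_i^s$ exceeds $1$. For Phase 2 I would verify four invariants in order. First, after layer $\mathbf{H}^{3}$, each cell stores $10^{Q+\lfloor \log_{10}S\rfloor+1}\zeta_i^s + s$, where by Assumption~\ref{assum: finite} the $Q$-digit value $\zeta_i^s$ lives strictly above the $\lfloor \log_{10} S\rfloor+1$ low-order digits reserved for the cascade index, so the two pieces of information are disjoint. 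Second, for the pairwise comparison block I would show by induction on the binary-tree depth $k=1,\ldots,Z$ that after $2k$ additional layers each surviving cell holds $\max$ over $2^k$ cascades of the shifted encoded value, relying on the identity $\max(a,b)=b+\mathrm{ReLU}(a-b)$ implemented across two consecutive linear+ReLU layers; the shift ensures the max on the encoded values agrees with the max on the true $\zeta_i^s$, breaking ties by smaller index as required. Third, the modulo activation $\sigma^{2Z+3}$ extracts the low-order digits, which by construction are exactly the winning cascade index at each node. Fourth, the final two layers convert that integer index into the one-hot row of the output tensor: the penultimate layer broadcasts the index to $S$ copies and thresholds them to produce a cumulative indicator $\mathds{1}[s \leq s^*]$, and the last layer differences consecutive entries to isolate a single $1$ at $s=s^*$.

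For the resource count I would tabulate: Phase 1 contributes one layer, the encoding contributes one layer, the comparison tree contributes $2Z$ layers, and the recovery plus indicator conversion contribute three more, giving $2Z+5 = O(Z)$ layers as claimed. The weight count is dominated by $\mathbf{W}^{1}$, which has at most $S(N+M)$ nonzero entries (one per edge per cascade plus the diagonal); all subsequent matrices are either identity-scaled, diagonal, or block-sparse with $O(NS)$ nonzeros per layer, and summing across $O(Z)$ layers remains $O(S(N+M))$. Each layer operates on $O(NS)$ coordinates, so the total number of piecewise linear units is $O(NS)$.

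The only step I expect to require care is bounding the number of pieces of the modulo activation $\sigma^{2Z+3}$. Because $\zeta_i^s\in[0,1]$ has $Q$ decimal places and is multiplied by $10^{Q+\lfloor \log_{10}S\rfloor+1}$ before the cascade index $s\leq S$ is added, the input to the modulo lies in a bounded integer lattice whose size is controlled by $Q$; one then argues that the sawtooth $x\mapsto x\bmod 10^{\lfloor \log_{10}S\rfloor+1}$ restricted to this finite reachable set consists of $O(10^{Q}) = O(2^{Q\log_2 10})=O(2^{Q})$ linear pieces up to the absorbed constant. All other activations are thresholds or affine maps with $O(1)$ pieces, so the overall piece bound is $O(2^Q)$ per unit. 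Combining the correctness verification with these counts yields the lemma.
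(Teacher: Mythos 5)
Your proposal is correct and follows essentially the same route as the paper, whose own proof is just a terse observation that the layer/weight/unit counts follow from the construction and that every activation is piecewise linear with at most two pieces except the modulo unit $\sigma^{2Z+3}$, whose piece count is bounded by the input scale $10^{Q}$. One small caution: your step $O(10^{Q})=O(2^{Q\log_2 10})=O(2^{Q})$ is not a valid big-O absorption (the constant sits in the exponent, so $10^{Q}=2^{\Theta(Q)}$ rather than $O(2^{Q})$), but this looseness is already present in the paper's own statement versus its proof and is harmless for the downstream VC-dimension bounds, which only depend on $\log$ of the piece count.
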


\begin{definition}[Hypothesis Space]
Repeating such one-step simulations for $N$ steps, the CLT model can be explicitly simulated. Taking the weights $w_{i,j}^s$ and the thresholds $\theta_i^s$ as parameters, we denote by $\F$ the hypothesis space formed by such neural networks.  
\end{definition}

One important property is that the entire neural network is composed of piecewise polynomial activation functions, which will be used in deriving its sample complexity. 


\subsection{Efficient ERM}
Theorem \ref{lemma:clt_to_nn} suggests that for any sample set $D$, there always exists a perfect hypothesis in $\F$. In what follows, we show that such an empirical risk minimization solution can be efficiently computed. It is sufficient to find the parameters that can ensure the output of each activation function coincides with the diffusion result. Since the activation functions are all pairwise linear functions, the searching process can be done by linear programming with a polynomial number of constraints. Formally, we have the following statement.

\begin{lemma}
\label{lemma: efficient_erm}
For a CLT model and each sample set $D$, there exists a hypothesis in $\F$ with zero training error, and it can be computed in polynomial time in terms of $N, M, S$, and $|D|$.

\end{lemma}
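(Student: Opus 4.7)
The plan is to reduce the empirical risk minimization problem to a polynomially sized linear feasibility problem over the adjustable parameters $\{w_{j,i}^s\}$ and $\{\theta_i^s\}$ of $\F$, and then invoke polynomial-time linear programming. The existence half of the claim is immediate from Lemma \ref{lemma:clt_to_nn}: the true CLT weights and thresholds that generated $D$ sit inside $\F$, and by the explicit simulation they reproduce every intermediate diffusion status of every sample, hence achieve zero training error. The work is in turning this into an efficient procedure.

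To set up the LP I would treat the (unknown) graph as the complete digraph on $V$, so the free parameters are one $w_{j,i}^s\in[0,1]$ per ordered pair $(v_j,v_i)$ and cascade $s$, and one $\theta_i^s\in[0,1]$ per node and cascade, with the CLT normalization $\sum_{v_j} w_{j,i}^s\le 1$ added as a linear constraint. Each sample $k\in[K]$ records, for every step $t$, every cascade $s$, and every node $v_i$, the set $\psi_{t-1}^{k,s}$ together with the phase-1 and phase-2 statuses at $v_i$. Phase 1 at $(k,t,i,s)$ contributes the linear constraint
\[
\sum_{v_j\in\psi_{t-1}^{k,s}} w_{j,i}^s \;\ge\; \theta_i^s
\]
if $v_i$ becomes $s$-active after phase 1, and its negation otherwise. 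Under Assumption \ref{assum: finite} every involved quantity is an integer multiple of $10^{-Q}$, so the strict negation can be tightened to $\sum_{v_j\in\psi_{t-1}^{k,s}} w_{j,i}^s \le \theta_i^s - 10^{-Q}$ without losing any feasible true model, keeping the system linear. Phase 2 is encoded analogously: for each $(k,t,i)$ the winning cascade $s^*$ (if any) is observed, and I add $\zeta_i^{k,s^*}\ge \zeta_i^{k,s}$ for every other cascade $s$ that survived phase 1, inserting an extra $10^{-Q}$ slack whenever the smaller-index tie-break rule requires it.

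The resulting LP has $O(N^2 S)$ variables and $O(K\,N^2 S^2)$ constraints, all with coefficients read directly from $D$. It is feasible by the existence argument above, and any feasible point corresponds to parameters for a network in $\F$ that reproduces every observed status and therefore has zero training error. Polynomial-time linear programming then yields a running time polynomial in $N$, $M$, $S$, and $|D|$. The step that requires the most care is Phase 2: its governing argmax-with-tie-break rule is not linear in isolation, and its handling relies on two points stated earlier in the excerpt, namely that the training data $D$ reveals the identity of the winning cascade at every $(k,t,i)$, which collapses the argmax to a finite list of pairwise linear inequalities, and that Assumption \ref{assum: finite} supplies a uniform $10^{-Q}$ gap for encoding strict comparisons and tie-breaking as linear constraints.
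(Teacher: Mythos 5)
Your proposal follows essentially the same route as the paper's proof: both reduce ERM to a linear feasibility problem over the weights and thresholds, using the fact that $D$ exposes the phase-1 and phase-2 statuses at every step so that the threshold condition and the argmax-with-tie-break each collapse into a polynomial number of pairwise linear inequalities, solved by polynomial-time LP. Your explicit use of Assumption~\ref{assum: finite} to replace strict inequalities with a uniform $10^{-Q}$ slack is a small refinement of a detail the paper leaves implicit (it writes strict inequalities directly into the program), but the argument is otherwise the same.
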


\subsection{Generalization Performance}
To establish the learnability of class $\F$, we first analyze the VC dimension of the binary-valued function class $\F_{i,j}$ that is obtained by restricting the output of to the diffusion status of node $v_i$ regarding cascade $s$ in the last layer of the neural network. That is,
\begin{align*}
\F_{i,s}=\Big\{ f_{i,s}: i \in [N], s \in [S],  f_{i,s}(\mathbf{H}^1)=\mathbf{H}^{1+N(2Z+5)}_{i\cdot S+s} \Big\}.
\end{align*}

\begin{lemma}{\label{lemma:VC}}
For each $i \in [N]$ and $s \in [S]$, the VC dimension of class $\F_{i,s}$ is $\VCdim(\mathcal{F}_{i,s}) = \Tilde{O}(S\cdot Q\cdot (N+M))$.
\end{lemma}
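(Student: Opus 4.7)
The plan is to exploit the finite-precision assumption (Assumption~\ref{assum: finite}) and convert the VC bound into an elementary cardinality argument, which avoids the more cumbersome bounds for piecewise-polynomial networks that would require careful bookkeeping of depth, total units, and numbers of activation pieces.

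First I would carefully identify which entries of the network realizing $f_{i,s}$ are actually \emph{tunable}. Going through the construction of Section~3.1 one weight matrix $\mathbf{W}^l$ at a time, every nonzero entry is either (a) a fixed structural constant ($\pm 1$, or the scaling factor $10^{Q+\lfloor\log_{10} S\rfloor+1}$) chosen by the simulation recipe and independent of the unknown CLT model, or (b) an edge weight $w_{i,j}^s$. Likewise, the activation functions are fixed up to the thresholds $\theta_i^s$ appearing inside $\sigma^1$. Because the same $w_{i,j}^s$ and $\theta_i^s$ are reused in each of the $N$ one-step blocks (they parameterize the CLT model, not the network layers), the total number of tunable real parameters is $W = |E|\cdot S + |V|\cdot S = O(S(N+M))$.

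Second, Assumption~\ref{assum: finite} forces each of these $W$ parameters to lie in $[0,1]$ with at most $Q$ decimal digits, giving at most $10^Q$ admissible values per parameter. Every hypothesis in $\mathcal{F}_{i,s}$ is determined by one such parameter vector, so $|\mathcal{F}_{i,s}| \le 10^{QW}$. Applying the trivial inequality $\VCdim(\mathcal{H}) \le \log_2 |\mathcal{H}|$ valid for any finite class, I get $\VCdim(\mathcal{F}_{i,s}) \le QW\log_2 10 = O(S\cdot Q \cdot (N+M))$, which matches the claimed $\Tilde{O}$ rate up to the absolute constant $\log_2 10$.

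The main obstacle is the accounting in the first step: one must resist the temptation to treat every nonzero entry of every $\mathbf{W}^l$ as an independent degree of freedom. The scaling factors and $\pm 1$ entries introduced by the encoding trick and the pairwise comparison layers in Section~3.1 are fixed design choices, not learned quantities, and must not be counted. Once this bookkeeping is done cleanly, the cardinality-to-VC step is essentially immediate, and notably it yields a bound that is tight in $W$ and linear in the precision $Q$ without any dependence on the simulation depth $N\log S$.
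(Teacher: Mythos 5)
Your parameter bookkeeping is right and matches the paper's own count: the only tunable quantities are the $S\cdot M$ edge weights and $S\cdot N$ thresholds, shared across all $N$ one-step blocks, so $W=O(S(N+M))$; and the inequality $\VCdim(\mathcal{H})\le\log_2|\mathcal{H}|$ for finite classes is of course valid. The gap is in the step where you declare the class finite. Assumption~\ref{assum: finite} is a statement about the \emph{data-generating} CLT model (it is what makes the digit-shifting/modulo encoding of the cascade identity correct, and it is what bounds the number of pieces of $\sigma^{2Z+3}$ by $O(10^Q)$); it is not stated as a restriction on the hypothesis space $\F$, which is parameterized by real-valued $w_{i,j}^s$ and $\theta_i^s$. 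This matters because the ERM procedure of Lemma~\ref{lemma: efficient_erm} is a linear program over these continuous parameters, and the hypothesis it returns has no reason to land on the $10^{-Q}$ grid. A VC bound that holds only for the discretized subclass does not cover the hypothesis the algorithm actually outputs, so the generalization argument in Theorem~\ref{theorem: final} would not go through as stated. (If you instead insist that $\F$ itself be discretized, the cardinality bound is fine, but then ``efficient ERM'' becomes a search over a grid rather than an LP, and that part of the paper would need a different justification.)

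For contrast, the paper bounds the VC dimension of the \emph{continuum}-parameterized class by invoking the Anthony--Bartlett bound for feed-forward networks with piecewise-polynomial activations, where $Q$ enters only through the $O(10^Q)$ pieces of the modulo activation (contributing a factor $\log(10^Q)=O(Q)$), and then removes the dependence on the simulation depth via an induction showing $\VCdim(\F_{i,s}^{t+1})\le\VCdim(\F_{i,s}^t)$ for the parameter-shared composition. Your route, where it applies, is genuinely more elementary and sidesteps the depth issue entirely; but as written it proves the bound for a strictly smaller class than the one the rest of the paper uses. To repair it you would either need to argue that restricting to finite-precision parameters is without loss of generality for both realizability and the ERM output, or fall back on a bound that tolerates real-valued weights.
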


\begin{proof}[proof sketch]
The proof is obtained by first showing that the functions in $\F_{i,s}$ have a manageable VC dimension when restricted to the one-step simulation. Given the fact that repeating such one-step simulations does not increase the model capacity in terms of shattering points, the VC dimension of the entire class $\F_{i,s}$ can be derived immediately.
\end{proof}

\begin{remark}
\label{remark: special_case}
The above suggests that the VC dimension is linear in the network size, the number of cascades, and the coding size of the numerical values. When there is only one cascade, pairwise comparisons and status recovery are no longer needed, and therefore, the size of the neural network does not depend on the system precision (i.e., $Q$). This gives a VC dimension of $\Tilde{O}(N+M)$, which recovers the results in \cite{narasimhan2015learnability}. Interestingly, similar results apply to the case of $S=2$ but not to larger $S$. We provide more details in Appendix \ref{subsec: remark_special_case}.
\end{remark}

With Lemmas \ref{lemma: efficient_erm} and \ref{lemma:VC}, the generalization bound of the ERM scheme for predicting the status of one node regarding one cascade can be derived through the class result of statistical learning theory, and the sample complexity of $\F$ follows from the union bound over all the nodes and cascades.

\begin{theorem}{\label{theorem: final}}
The influence function class $\mathcal{F}$ is PAC learnable and the sample complexity for achieving a generalization error no larger than $\epsilon$ with probability at least $1-\delta$ is $ O(\frac{\VCdim(\F_{i,s})\log(1/\epsilon) + \log(NS/\delta)}{\epsilon})$.

\end{theorem}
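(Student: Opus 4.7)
The plan is to combine Lemma~\ref{lemma: efficient_erm}, Lemma~\ref{lemma:VC}, and the classical PAC sample complexity bound for realizable VC classes. First I would invoke Lemma~\ref{lemma: efficient_erm} to obtain in polynomial time an ERM hypothesis $\hat{F} \in \F$ with zero training error on $D$. Since the training data is generated by a true $F \in \F$ (by the explicit simulation of the CLT model), we are in the realizable setting, and the existence of a consistent hypothesis is guaranteed rather than merely assumed.

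Next I would reduce the multi-coordinate problem to $NS$ binary tasks by decomposing the loss. By linearity of expectation,
\begin{equation*}
L(\hat{F}) \;=\; \frac{1}{2NS} \sum_{i \in [N]} \sum_{s \in [S]} L_{i,s}(\hat{F}),
\qquad L_{i,s}(\hat{F}) \define \Pr_{\mathbf{I}_0 \sim \mathcal{D}}\bigl[f_{i,s}(\mathbf{I}_0) \neq \hat{f}_{i,s}(\mathbf{I}_0)\bigr].
\end{equation*}
The key observation is that zero joint empirical error for $\hat{F}$ forces zero empirical error of $\hat{f}_{i,s}$ on every coordinate, so $\hat{f}_{i,s}$ is simultaneously an ERM for each binary class $\F_{i,s}$ in the realizable regime. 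This lets me treat each coordinate independently under the same hypothesis.

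Then I would apply the standard realizable-case sample complexity theorem for VC classes: for absolute constant $c$, a sample of size $m \geq c\bigl(\VCdim(\F_{i,s})\log(1/\epsilon) + \log(1/\delta')\bigr)/\epsilon$ suffices to guarantee $L_{i,s}(\hat{f}_{i,s}) \leq \epsilon$ with probability at least $1-\delta'$. Setting $\delta' = \delta/(NS)$ and union-bounding over the $NS$ coordinate pairs, with probability at least $1-\delta$ all $L_{i,s}(\hat{F}) \leq \epsilon$ simultaneously. Substituting into the decomposition yields $L(\hat{F}) \leq \frac{1}{2NS}\cdot NS \cdot \epsilon \leq \epsilon$, with the stated sample size $O\!\bigl((\VCdim(\F_{i,s})\log(1/\epsilon) + \log(NS/\delta))/\epsilon\bigr)$.

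The main obstacle is conceptual rather than technical: one must verify that the \emph{joint} ERM over $\F$ remains a \emph{coordinate-wise} ERM for every $\F_{i,s}$, which works here precisely because realizability makes zero joint error attainable and thus forces zero error on each coordinate; this would fail in the agnostic setting and would instead require a Rademacher-style bound for vector-valued losses. The remaining subtlety is calibrating $\delta' = \delta/(NS)$ so that the union bound contributes only the additive $\log(NS/\delta)$ term rather than a multiplicative $NS$ factor to the sample complexity.
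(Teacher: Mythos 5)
Your proposal is correct and follows essentially the same route as the paper: apply the realizable-case VC sample-complexity bound to each binary class $\F_{i,s}$, then take a union bound over the $NS$ node--cascade pairs with confidence parameter $\delta/(NS)$, which yields the additive $\log(NS/\delta)$ term. You actually make explicit two steps the paper leaves implicit---the decomposition of the zero-one loss by linearity of expectation and the observation that a jointly consistent hypothesis is simultaneously a consistent ERM for every coordinate class---but these are refinements of, not departures from, the paper's argument.
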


\section{Experiment}
In this section, we present empirical studies for experimentally examining the practical utility of the proposed method. 
\subsection{Experimental settings}

\textbf{Graphs.} We adopt two classic graph structures (\text{Kronecker} \cite{leskovec2010kronecker} and power-law) and one real graph Higgs, which was built after monitoring the spreading process of the messages posted between 1st and 7th July 2012 regarding the discovery of a new particle with the features of the elusive Higgs boson \cite{de2013anatomy}. The graph statistics are provided in the supplementary material.

\textbf{Diffusion Model.} We simulate the case with the number of cascades $S$ selected from $\{2, 3, 4\}$. The thresholds $\theta_i^s$ are generated uniformly at random from $[0,1]$. We consider two settings for weight generation, following the weighted cascade setting \cite{borodin2010threshold}, the weights on Kronecker and Power-law are set as $w_{i,j}^s = \frac{1}{|N(v_j)| + s/7}$ for each pair $(v_i,v_j) \in E$ and each $s \in [S]$. On Twitter, the weights are sampled uniformly at random from $[0,1]$ and then normalized such that $\sum_{v_i \in N(v_j)}w_{i,j}^s =1 $ for each $v_j \in V$. The parameters are set to be numbers with three decimal places.

\textbf{Samples.} To generate one sample of $(\mathbf{I}_0^1, \mathbf{I}_{1:N, :, :, 2}^1)$, the total number of seed nodes $|\bigcup_{s=1}^S \psi_0^s|$ is generated from $[0.1N, 0.5N]$ uniformly at random, and then a subset of $|\bigcup_{s=1}^S \psi_0^s|$ nodes are randomly selected from $V$. For each selected node, it is assigned to one cascade uniformly at random. Given the initial status, the ground truth is generated following the diffusion model. For each graph, we generate a pool of 2,000 samples among which the average diffusion step is four and the average number of total active nodes is 254.

\textbf{Methods.} Given the samples, the parameters of our model are computed by the linear programming solver Cplex \cite{cplex2009v12}. Alternatively, one can use off-the-shelf supervised learning methods to train a one-step diffusion model on each node, and make predictions by repeatedly simulating the one-step diffusion process. Given the nature of binary classification, we implement three methods, logistic regression (\textbf{LR}), support vector machine (\textbf{SVM}), and multilayer perceptron (\textbf{MLP}), using scikit-learn \cite{scikit-learn}. For such baselines, the number of repetitions for testing is selected from $\{1,2,3,4,5\}$. We also implement the random method that makes a random prediction.

\textbf{Training and Testing.} The size of the training set is selected from $\{50,100,500\}$ and the testing size is $500$. In each run, the training and testing samples are randomly selected from the sample pool, and we evaluate the performance of each method in terms of F1 score, precision, and accuracy. The entire process is repeated for five times, and we report the average performance together with the standard deviation.

\begin{table}[!t]
\renewcommand{\arraystretch}{1.3} 
\centering
\begin{tabular}{@{} l @{} c   c @{\hspace{5mm}} c @{\hspace{5mm}} c  @{}}

\multicolumn{2}{l}{\textbf{Kronecker}} &   \textbf{F1 Score} & \textbf{Precision} & \textbf{ Accuracy}  \\ 

\midrule

\multicolumn{2}{l}{\textbf{Our Method}}  & 0.994{\small (1E-3)} & 0.995{\small (1E-4)} & 0.993{\small (1E-4)} \\ 

\midrule

\multicolumn{1}{c}{\multirow{5}{*}{\textbf{LR}}} & \textbf{1} & 0.499{\small (1E-3)} & 0.676{\small (2E-3)} &  0.709{\small (2E-3)}  \\

& \textbf{2} & 0.474{\small (1E-3)} & 0.752{\small (1E-2)} &  0.705{\small (4E-3)}  \\ 

& \textbf{3} & 0.460{\small (3E-3)} & 0.802{\small (1E-2)}& 0.705{\small (6E-3)} \\

 & \textbf{4} & 0.452{\small (3E-3)} & 0.827{\small (5E-3)} &  0.701{\small (4E-3)}	\\

& \textbf{5} & 0.451{\small (3E-3)} & 0.832{\small (6E-3)} & 0.701{\small (4E-3)} 	\\ 

\midrule

\multicolumn{1}{c}{\multirow{5}{*}{\textbf{SVM}}} & \textbf{1} & 0.556{\small (2E-3)} & 0.716{\small (2E-3)} &  0.727{\small (3E-3)} 	 \\

& \textbf{2} & 0.496{\small (1E-3)} & 0.731{\small (2E-3)} &  0.705{\small (4E-3)}  \\

& \textbf{3} & 0.480{\small (3E-3)} & 0.770{\small (9E-3)} &  0.702{\small (5E-3)}  \\

& \textbf{4} & 0.470{\small (4E-3)}& 0.798{\small (9E-3)} &  0.697{\small (3E-3)}   \\

& \textbf{5} & 0.465{\small (3E-3)} & 0.818{\small (6E-3)} &  0.698{\small (6E-3)}  	\\ 

\midrule

\multicolumn{1}{c}{\multirow{5}{*}{\textbf{MLP}}} & \textbf{1} & 0.909{\small (1E-3)} & 0.975{\small (1E-4)} &  0.935{\small (1E-4)}   	\\

& \textbf{2} & 0.909{\small (1E-3)} & 0.975{\small (1E-4)} & 0.934{\small (1E-3)}  	\\

& \textbf{3} & 0.909{\small (1E-3)} & 0.975{\small (1E-4)} & 0.935{\small (1E-3)} \\

& \textbf{4} & 0.909{\small (1E-3)} & 0.975{\small (1E-3)} & 0.935{\small (1E-3)}  \\

& \textbf{5} & 0.909{\small (1E-3)} & 0.975{\small (1E-4)} &  0.934{\small (1E-4)}  \\ 

\midrule

\multicolumn{2}{c}{\textbf{Random}}   & 0.209{\small (1E-2)} & 0.250{\small (1E-2)} & 0.250{\small (1E-2)}	\\ 


\end{tabular}%
\caption{Main results on Kronecker graph with 100 training samples. Each cell presents the average performance plus the standard deviation. Testing results for LR, SVM and MLP are provided for different numbers of iterations.}
\label{tab: main}
\vspace{-2mm}
\end{table}

\subsection{Results}

\subsubsection{Major observations}
The main results are given in Table \ref{tab: main}. We see from the table that all the prediction results are statistically robust per the standard deviation. The main observation is that our method outperforms other methods in a significant manner in terms of any of the measurements. MLP is better than LR and SVM, but still produces low-quality predictions on the power-law graph. In addition, for classic learning methods, the Kronecker graph is relatively easy to handle, while the predictions on the power-law graph are consistently less satisfactory, which suggests that they are sensitive to graph structures.   

\subsubsection{Minor observations}

\textbf{Hidden layers meeting diffusion status.} From the standpoint of surrogate modeling, it may be interesting to notice that the hidden features of the proposed neural networks meet exactly the node status during the diffusion process, which somehow makes our model explainable and more transparent. While the original goal is to predict the final diffusion status, we seek to examine to what extent the hidden features are semantically meaningful. To this end, we compare the hidden features with the node status in the corresponding steps, and the results are given in Figure \ref{fig:step}. For our method, we in general observe that the one-step prediction is very accurate, while the match becomes less perfect with more steps. Therefore, given the fact that most of the samples contain no more than six diffusion steps, our method is overall effective. 

\textbf{Imbalance class distribution.} Imbalance class distribution is an inherent issue in predicting the node status in that the majority of the nodes tend to be inactive, and therefore high accuracy does not necessarily imply good prediction performance. According to the results in Table \ref{tab: main}, the standard methods often simply predict all the nodes as inactive, which gives a decent accuracy but low F1 score and precision. In such a sense, our method is more robust and effective.

\textbf{Impact of testing iterations.} For LR and MLP, one can see that repeating the single-step prediction does not help increase prediction quality. One plausible reason is that the single-step prediction is not sufficiently good and its error is amplified by the repetitions. For MLP, the results suggest that the prediction almost converges after the first single-step prediction.

\textbf{Impact of the number of training samples.} Tables \ref{tab: kro_full} and \ref{tab: pl-full} in Appendix \ref{subsec: exp_more} presents the full results on Kronecker and power-law using different training sizes. We see that the performance increases very slightly when more samples are provided, suggesting that training size is not the main bottleneck.

\textbf{Impact of the number of cascades.} As our work focuses on the CLT model, we are interested in the performance under different numbers of cascades, and the results of this part can be found in Tables \ref{tab:kro_234} and \ref{tab: pl_234} in Appendix \ref{subsec: exp_more}. Our method exhibits a stable performance, while for other methods, the task becomes more challenging in the presence of more cascades. The above tables also present the running time of each method, which confirms that our proposed linear programming can be easily handled by standard software packages.

\begin{table}[!t]
\renewcommand{\arraystretch}{1.2} 
\centering
\begin{tabular}{@{} c    c @{\hspace{5mm}} c @{\hspace{5mm}} c  @{}}
\multicolumn{1}{l}{\textbf{Higgs}} &   \textbf{F1 Score} & \textbf{Precision} & \textbf{ Accuracy}  \\ 
\midrule
\multicolumn{1}{l}{\textbf{Our Method}}  &  0.999{\small(1E-4)}&0.999{\small(1E-4)}&0.998{\small(2E-4)}\\ 
\multicolumn{1}{l}{\multirow{1}{*}{\textbf{LR}}} & 0.427{\small(4E-3)} &	0.322{\small(4E-3)} &	0.663{\small(2E-3)}	\\
\multicolumn{1}{l}{\multirow{1}{*}{\textbf{SVM}}} & 0.492{\small(6E-3)}&	0.674{\small(6E-3)}&	0.648{\small(4E-3)}	 \\
\multicolumn{1}{l}{\multirow{1}{*}{\textbf{MLP}}} &   0.939{\small(3E-3)} &	0.962{\small(3E-3)} &	0.960{\small(2E-3)}		\\
\multicolumn{1}{l}{\textbf{Random}}   &  0.451{\small(4E-4)} & 0.450{\small(3E-4)} & 0.450{\small(4E-4)}	  	\\ 


\end{tabular}%
\caption{Main results on Higgs with the cascades using 100 training samples. The testing of LR, SVM, and MLP are performed with one iteration.}
\label{tab: main_higgs}
\end{table}

\begin{figure}[!t]
    \centering
    \includegraphics[width = 0.45\textwidth]{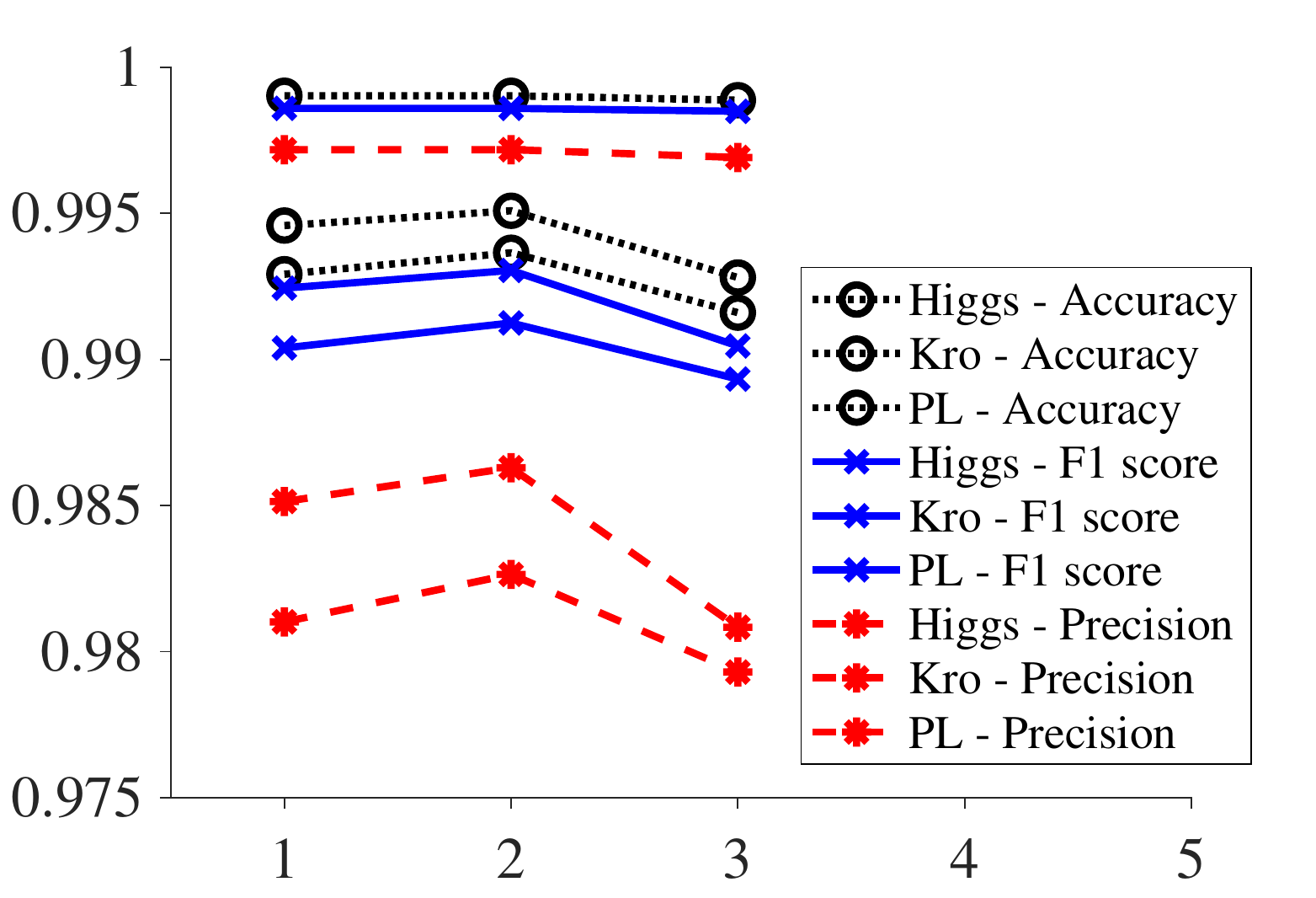}
    \caption{Results of step-step matching. }
    \label{fig:step}
    \vspace{-2mm}
\end{figure}

\section{Further discussions}
In this paper, we demonstrate the possibility of design methods for simulating social contagions by using neural networks that directly mimic the diffusion pattern. We present a learnability study for the competitive threshold models as well as an efficient learning algorithm with a provable sample complexity. While our method outperforms classic supervised learning methods, it does not rule out the possibility that advanced representation techniques can achieve better learning performance with sufficient samples, which is one direction of our future work. In addition, there exist other classic operational models, such as the independent cascade model and voter model, and it remains unknown how to explicitly simulate them through neural networks, or more generally, graphical models. Given the encouraging experimental results we acquire in this paper, we believe it is worth investigating more models beyond the single-cascade case.

\bibliographystyle{named}
\bibliography{ijcai22}
\newpage

\appendix

\section*{Supplementary Materials}
\section{Proofs}

\subsection{Proof of Lemma \ref{lemma:clt_to_nn}}

The numbers of layers, adjustable weights, and computation units follow directly from our design. By scrutiny, all the activation functions are piecewise linear functions. Except for $\sigma^{2Z+3}$, each of the activation functions is with a maximum of two pieces because they are either linear, threshold or ReLu. For $\sigma^{2Z+3}$, the number pieces is bounded by the scale of input, which is $10^Q$ due to the system precision.

\subsection{Proof of Lemma \ref{lemma: efficient_erm}}
Given a collection of samples $D=\{(\mathbf{I}_0^1, \mathbf{I}_{1:N, :, :, 2}^1)\}_{k=1}^K$, 
we prove that the zero error for each data point $(\mathbf{I}_0^k, \mathbf{I}_{1:N, :, :, 2}^k)$ can be achieved by a collection of linear constraints over the parameters. In particular, it is sufficient if the output of every one-step simulation in the neural network matches the ground truth. Fixing a sample and a time step $t>0$, for the simulation of the phase 1 regarding each node $v_i$ and each cascade $s$, the desired matched can be produced by Equation \ref{eq: lp_1_1} (resp, Equation \ref{eq: lp_1_0}) for $\mathbf{I}_{t, i, 1, s }^{k}=1$ (resp, $\mathbf{I}_{t, i, 1, s }^{k}=0$). 
\begin{flalign}
\label{eq: lp_1_1}
&(2(\mathbf{I}_{t, i, s,1 }^{k})-1)(\sum_{v_j\in N(v_i)}w_{j,i}^s\mathbf{I}_{t-1,i, s, 2}^{k}-\theta_{i}^s) \geq 0
\end{flalign}
\begin{flalign}
\label{eq: lp_1_0}
&(2(\mathbf{I}_{t, i, s ,1}^{k})-1)(\sum_{v_j\in N(v_i)}w_{j,i}^s\mathbf{I}_{t-1,i, s,2}^{k}-\theta_{i}^s) > 0
\end{flalign}
For phase 2 and each $s^*$ such that $\mathbf{I}_{t,i, s^*,2}^{k}=1$, the desired matched can be obtained by enforcing the winning cascade has the largest weights among all candidate cascades:
\begin{flalign}
\label{eq: lp_2_1}
&\sum_{v_j\in N(v_i)}w_{j, i}^{s^*}\mathbf{I}_{t,i, s^*,1}^{k} -\sum_{v_j\in N(v_i)}w_{j, i}^s\mathbf{I}_{t,i, s, 1}^{k} \geq 0
\end{flalign}
for each $s> s^*$ and 
\begin{flalign}
\label{eq: lp_2_2}
&\sum_{v_j\in N(v_i)}w_{j, i}^{s^*}\mathbf{I}_{t,i, s^*,1}^{k} -\sum_{v_j\in N(v_i)}w_{j, i}^s\mathbf{I}_{t,i, s, 1}^{k} > 0
\end{flalign}
for each $s< s^*$. Taking the above constraints over all samples and time steps, together with the normalization constraint $\sum_{v_j \in N(v_i)} w_{j,i}^{s}\leq 1$ as well as the non-negative constraint $w_{j,i}\geq 0$, we acquire the entire linear programming.

\subsection{Proof of Lemma \ref{lemma:VC}}

The following standard result will be useful. 

\begin{theorem}{\cite{anthony1999neural}}{\label{theorem:VCofbook}} For a feed-forward network with a total of $a_1 \in \mathbb{Z}$ weights and $ a_2 \in \mathbb{Z}$ computational units, in which the output unit is a linear threshold unit and every other computation unit has a piecewise-polynomial activation function with $a_3 \in \mathbb{Z}$ pieces and degree no more than $a_4 \in \mathbb{Z}$. Suppose in addition that the computation units in the network are arranged in $ a_5 \in \mathbb{Z}$ layers, so that each unit has connections only from units in earlier layers. The VC dimension of the function class computed by such neural networks is upper bounded by
\begin{align*}
2a_1 a_5\log_2(\frac{4a_1 a_2 a_3 a_5}{\ln{2}}) 
                + 2a_1 a_5^2\log_2(a_4 +1)+2a_5
\end{align*}
\end{theorem}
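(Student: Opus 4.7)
The plan is to reduce the multi-output generalization bound to a per-output bound via projection classes, combine Lemma~\ref{lemma: efficient_erm} (realizability/efficient ERM) with Lemma~\ref{lemma:VC} (VC dimension of each projection), invoke the standard realizable PAC theorem coordinate-wise, and then glue the coordinates together with a union bound. The quantity $\VCdim(\F_{i,s})$ that appears in the statement is exactly the VC dimension of the projection class defined before Lemma~\ref{lemma:VC}, so the target bound is naturally phrased in terms of single-coordinate complexity.

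First, I would fix an ERM learner for $\F$: given the training sample $D$, let $\hat F \in \F$ be any hypothesis of zero training error, which exists and is computable in polynomial time by Lemma~\ref{lemma: efficient_erm}. Projecting onto coordinate $(i,s)$ yields a hypothesis $\hat F_{i,s} \in \F_{i,s}$ that also has zero training error on the induced binary labels. Thus, for every $(i,s)$ we are in the realizable PAC setting for a binary class of VC dimension $\widetilde{O}(S Q (N+M))$, and the classical realizable sample-complexity theorem (Blumer et al.) gives, with probability at least $1-\delta'$,
\begin{equation*}
L_{i,s}(\hat F_{i,s}) \;\leq\; \epsilon
\quad \text{whenever} \quad
K \;=\; \Omega\!\left(\tfrac{\VCdim(\F_{i,s})\log(1/\epsilon) + \log(1/\delta')}{\epsilon}\right),
\end{equation*}
where $L_{i,s}$ denotes the per-coordinate $0/1$ generalization error.

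Next, I would set $\delta' = \delta/(NS)$ and apply the union bound over the $NS$ pairs $(i,s)$. This costs only an additive $\log(NS)$ term inside the sample complexity, matching the $\log(NS/\delta)$ factor in the theorem. Finally, I would observe that the loss $\loss$ is just the average of the per-coordinate indicator losses (up to the constant $1/(2NS)$), so by linearity of expectation
\begin{equation*}
L(\hat F) \;=\; \tfrac{1}{2NS} \sum_{i,s} L_{i,s}(\hat F_{i,s}) \;\leq\; \epsilon,
\end{equation*}
simultaneously with high probability. Together with the fact that ERM on $\F$ is polynomial-time realizable, this establishes PAC learnability with the stated sample complexity.

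The only non-routine points are: verifying that the ERM output in $\F$ indeed projects to an ERM in each $\F_{i,s}$ (immediate because zero training error pointwise implies zero training error per coordinate), and ensuring that the $\log(NS/\delta)$ term absorbs the union-bound cost cleanly rather than multiplying the VC term. Neither presents a real obstacle; the bulk of the technical work has already been done in Lemmas~\ref{lemma: efficient_erm} and~\ref{lemma:VC}, and this theorem is essentially the packaging of those two ingredients through a coordinate-wise realizable PAC argument.
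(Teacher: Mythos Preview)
Your proposal does not address the stated theorem at all. The statement you were asked to prove is Theorem~\ref{theorem:VCofbook}, the VC-dimension bound for feed-forward networks with piecewise-polynomial activations, quoted from \cite{anthony1999neural}. What you have written is a proof of Theorem~\ref{theorem: final} (PAC learnability of $\F$ with the stated sample complexity): you invoke Lemmas~\ref{lemma: efficient_erm} and~\ref{lemma:VC}, apply the realizable PAC bound coordinate-wise, and union-bound over the $NS$ outputs. None of this has any bearing on the combinatorial argument that yields the bound $2a_1 a_5\log_2(4a_1 a_2 a_3 a_5/\ln 2) + 2a_1 a_5^2\log_2(a_4+1) + 2a_5$ in terms of the network architecture parameters.

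The paper itself does not prove Theorem~\ref{theorem:VCofbook}; it is quoted verbatim as a standard result from \cite{anthony1999neural} and used as a black box inside the proof of Lemma~\ref{lemma:VC}. If your intention was to prove Theorem~\ref{theorem: final}, then your argument is correct and essentially identical to the paper's own proof of that theorem (same per-coordinate realizable PAC bound plus union bound over $i\in[N]$, $s\in[S]$). But as an answer to the statement actually given, it is off-target.
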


Consider the proposed neural network with $N$ one-step diffusion steps. Applying the above result together with Lemma \ref{lemma:clt_to_nn} to $\F_{i,s}$ yields a VC dimension of $\Tilde{O}(NSQ(N+M))$. In what follows, we prove that tighter bounds are possible. To this end, let us denote by $\F_{i,s}^t$ the class of functions mapping the initial status to the output of after $t$ steps of the simulation in the neural networks, i.e.,
\begin{align*}
\F_{i,s}^t=\Big\{ f_{i,s}^t: i \in [N], s \in [S],  f_{i,s}^t(\mathbf{H}^1)=\mathbf{H}^{1+t(2Z+5)}_{i\cdot S+s} \Big\}.
\end{align*}
We will inductively prove that $\VCdim(\F_{i,s}^t)=  \Tilde{O}(SQ(N+M))$ for each $t$. For the case of $t=1$, we have the desired result due to Lemma \ref{lemma:clt_to_nn} and Theorem \ref{theorem:VCofbook}. Since the parameters are shared in different diffusion steps, the following fact completes the proof, which follows quite directly from Lemma 7 in \cite{narasimhan2015learnability}.

\begin{lemma}\cite{narasimhan2015learnability}
$\VCdim(\F_{i,s}^{t+1})\leq \VCdim(\F_{i,s}^t)$ for each $t>1$.
\end{lemma}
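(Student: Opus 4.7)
The plan rests on the structural identity
\begin{equation*}
f^{t+1}_{i,s,\vec\theta}(\mathbf{H}^1) \;=\; f^{t}_{i,s,\vec\theta}\bigl(g_{\vec\theta}(\mathbf{H}^1)\bigr),
\end{equation*}
where $g_{\vec\theta}:\{0,1\}^{NS}\to\{0,1\}^{NS}$ is the one-step simulation module of Lemma~\ref{lemma:clt_to_nn} and, crucially, the \emph{same} parameter vector $\vec\theta$ (the weights $w_{i,j}^s$ and thresholds $\theta_i^s$) drives both the inner factor $g_{\vec\theta}$ and the outer factor $f^{t}_{i,s,\vec\theta}$. My strategy is to convert any shattered $K$-point configuration for $\F^{t+1}_{i,s}$ into a shattered $K$-point configuration for $\F^{t}_{i,s}$; this immediately yields $\VCdim(\F^{t+1}_{i,s})\le \VCdim(\F^{t}_{i,s})$.

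Concretely, suppose $\{x_1,\ldots,x_K\}\subseteq\{0,1\}^{NS}$ is shattered by $\F^{t+1}_{i,s}$. For each dichotomy $b\in\{0,1\}^K$ pick a witnessing parameter $\vec\theta_b$ and define $y_k^b:=g_{\vec\theta_b}(x_k)$. The identity above then gives $f^{t}_{i,s,\vec\theta_b}(y_k^b)=b_k$ for every $k$, so every dichotomy of $\{x_1,\ldots,x_K\}$ realized by $\F^{t+1}_{i,s}$ corresponds to a dichotomy realized by $\F^{t}_{i,s}$ on some $K$-tuple of state vectors. The task then reduces to consolidating these per-dichotomy $K$-tuples into a single tuple $(y_1^\ast,\ldots,y_K^\ast)$ that is shattered by $\F^{t}_{i,s}$, which is the statement we need.

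The main obstacle is exactly this consolidation: the image $(y_1^b,\ldots,y_K^b)$ depends on $b$ via the witnessing $\vec\theta_b$, so a priori the $2^K$ dichotomies can be dispersed across many distinct image configurations, and a naive pigeonhole on the finite range $\{0,1\}^{NS}$ falls short of producing a single fully shattered $K$-tuple. I would handle this following the template of Lemma~7 in \cite{narasimhan2015learnability}: because $g_{\vec\theta}$ and $f^{t}_{i,s,\vec\theta}$ are rigidly coupled through the common $\vec\theta$, an extremal argument applied to a configuration $\{x_1,\ldots,x_K\}$ of maximal shattering size, combined with the fact that altering only the ``new'' step would otherwise increase the number of effective parameters (which it does not here), forces the existence of one admissible choice $\vec\theta^\ast$ whose image tuple $(y_k^\ast)=(g_{\vec\theta^\ast}(x_k))$ itself supports all $2^K$ dichotomies under $\F^{t}_{i,s}$. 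This rigidity step is the delicate part; once it is in place, the claimed inequality is immediate.
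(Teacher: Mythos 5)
Your setup is sound and you have correctly isolated where the difficulty lies, but the proposal does not close that difficulty: the final ``rigidity/extremal'' step is a restatement of the lemma, not a proof of it. Note first that the paper itself contains no self-contained argument for this statement --- its entire proof consists of the observation that parameters are shared across diffusion steps together with a citation of Lemma~7 in \cite{narasimhan2015learnability} --- so a blind proof would have to supply exactly the mechanism you leave unspecified. Concretely, after you pick, for each dichotomy $b\in\{0,1\}^K$, a witness $\vec\theta_b$ and form the image tuple $y^b_k=g_{\vec\theta_b}(x_k)$, what you have is: for each $b$, one tuple $y^b$ and one parameter $\vec\theta_b$ realizing the \emph{single} dichotomy $b$ on that tuple. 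Shattering a consolidated tuple $(y_1^\ast,\ldots,y_K^\ast)$ by $\F_{i,s}^t$ would require, for \emph{every} $c\in\{0,1\}^K$, some parameter $\vec\theta$ with $f^t_{i,s,\vec\theta}(y_k^\ast)=c_k$ for all $k$; your construction ties each dichotomy to its own tuple and its own parameter, and nothing in the argument relates witnesses across dichotomies. As you yourself observe, pigeonhole over the finite range $\{0,1\}^{N\cdot S}$ only yields some tuple carrying a fraction of the $2^K$ dichotomies, not all of them. The extremal assumption that $\{x_1,\ldots,x_K\}$ has maximal shattering size gives no additional leverage: maximality constrains what $\F^{t+1}_{i,s}$ can do on \emph{other} point sets in the input domain, whereas the consolidation problem concerns $\F^{t}_{i,s}$ evaluated at image points --- a different class at different inputs. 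Likewise, the parenthetical justification (``altering only the new step would otherwise increase the number of effective parameters, which it does not here'') does not translate into any step of a VC argument: weight sharing keeps the parameter \emph{count} fixed, but by itself it does not prevent the set of realized dichotomies from growing with depth, and your sketch invokes no property of the CLT dynamics or of the specific network design beyond sharing.

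In short, the decisive step --- passing from per-dichotomy witnesses for the deeper class to a single configuration shattered by the shallower class --- is precisely what you defer to ``the template of Lemma~7,'' which means your proof ends where the paper's does: at the citation. If you want a self-contained route, the pushforward-of-shattered-sets strategy is probably the wrong template; a more tractable approach is to bound growth functions directly. On any fixed $K$ inputs, the states at every layer are binary, so each one-step block queries only sign patterns of functions that are linear (or piecewise polynomial of bounded degree, per Lemma~\ref{lemma:clt_to_nn}) in the \emph{shared} parameter vector, with coefficients determined by the current binary states; iterating the block refines a cell decomposition of parameter space whose size can be controlled by Warren-type counting, so that depth enters the bound on $\Pi_{\F^t_{i,s}}(K)$ only through the number of tests, i.e.\ logarithmically. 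That yields the $\Tilde{O}(SQ(N+M))$ conclusion actually needed in Lemma~\ref{lemma:VC}, whereas the monotonicity statement as you attack it remains unproved in your proposal.
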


\subsection{Proof of Theorem \ref{theorem: final}}
The following theorem describes the sample complexity for learning a class with a finite VC dimension.
\begin{theorem}{\label{theorem:samplecomplexity}}{\cite{shalev2014understanding}}
Let $\mathcal{H}$ be a hypothesis class of functions from a domain $\mathcal{X}$ to $\{0,1\}$ and let the loss function be the $0$-$1$ loss. Assume that $\VCdim (\mathcal{H}) = d < \infty$. Then, there are absolute constants $C_1, C_2$ such that $\mathcal{H}$ is PAC learnable with sample complexity:
\begin{align*}
    C_1\frac{d+\log (1/\delta)}{\epsilon}\leq m_\mathcal{H}(\epsilon,\delta)\leq C_2\frac{d\log(1/\epsilon)+\log(1/\delta)}{\epsilon}
\end{align*}
in terms of achieving an error no less than $\epsilon$ with probability at least $1-\delta$.
\end{theorem}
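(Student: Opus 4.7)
My plan is to derive PAC learnability of $\F$ by reducing the problem to the standard realizable VC bound applied coordinate-by-coordinate, then stitching the coordinate bounds back together via a union bound and the definition of $\loss$.

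First I would observe that the loss on the full output decomposes additively: since
\[
\loss(F(\mathbf{I}_0), \hat{F}(\mathbf{I}_0)) = \frac{1}{2NS}\sum_{i \in [N]}\sum_{s \in [S]} \mathds{1}\!\bigl[F_{i,s}(\mathbf{I}_0) \neq \hat F_{i,s}(\mathbf{I}_0)\bigr],
\]
linearity of expectation gives $L(\hat F) = \frac{1}{2NS}\sum_{i,s} L_{i,s}(\hat F_{i,s})$, where $L_{i,s}$ denotes the coordinate-wise $0$-$1$ generalization loss and $\hat F_{i,s} \in \F_{i,s}$ is the projection of $\hat F$ onto the $(i,s)$ output. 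Hence any uniform bound $L_{i,s}(\hat F_{i,s}) \le 2\epsilon$ for every $(i,s)$ immediately yields $L(\hat F) \le \epsilon$.

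Next I would invoke realizability: because the true influence function $F$ itself lies in $\F$ by the explicit simulation of Lemma \ref{lemma:clt_to_nn}, Lemma \ref{lemma: efficient_erm} guarantees an ERM hypothesis $\hat F \in \F$ with zero training error on the full dataset $D$, and hence zero training error on each coordinate's induced binary dataset. I would then apply Theorem \ref{theorem:samplecomplexity} with $\mathcal{H} = \F_{i,s}$, $d = \VCdim(\F_{i,s})$, accuracy parameter $2\epsilon$, and confidence parameter $\delta/(NS)$, obtaining that
\[
m \ge C_2 \cdot \frac{\VCdim(\F_{i,s})\log(1/(2\epsilon)) + \log(NS/\delta)}{2\epsilon}
\]
samples suffice to guarantee $L_{i,s}(\hat F_{i,s}) \le 2\epsilon$ with probability at least $1 - \delta/(NS)$. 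A union bound over the $NS$ coordinates then ensures, with probability at least $1-\delta$, that the bound holds simultaneously for all $(i,s)$, which by the first paragraph gives $L(\hat F) \le \epsilon$. Absorbing the $2$'s into the big-$O$ constant recovers the claimed sample complexity $O\bigl((\VCdim(\F_{i,s})\log(1/\epsilon) + \log(NS/\delta))/\epsilon\bigr)$.

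There is no genuinely hard step here; everything is bookkeeping once the right pieces are lined up. The only subtlety worth being careful about is that a single $\hat F$ is selected jointly (its parameters are shared across all outputs), rather than each coordinate being fit independently. This is a feature, not a bug: the standard realizable VC bound only needs that the selected coordinate hypothesis $\hat F_{i,s}$ lies in $\F_{i,s}$ and has zero empirical $0$-$1$ error on the i.i.d.\ sample, both of which follow from Lemma \ref{lemma: efficient_erm}. Efficiency of the learner, which is the computational half of PAC learnability, is already provided by Lemma \ref{lemma: efficient_erm}, so combining that with the above statistical bound completes the proof.
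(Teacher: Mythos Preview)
Your proposal does not prove the stated theorem. Theorem~\ref{theorem:samplecomplexity} is the \emph{cited} textbook result (the fundamental theorem of statistical learning, from \cite{shalev2014understanding}) relating finite VC dimension to PAC learnability of an abstract binary class $\mathcal{H}$. The paper does not prove it at all; it merely quotes it as a tool inside the proof of Theorem~\ref{theorem: final}. What you have written is a proof of Theorem~\ref{theorem: final} (PAC learnability of $\F$), and indeed you explicitly \emph{invoke} Theorem~\ref{theorem:samplecomplexity} as a black box in your argument. If your goal were really to establish Theorem~\ref{theorem:samplecomplexity}, this would be circular.

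As a proof of Theorem~\ref{theorem: final}, your write-up is correct and follows essentially the same route as the paper: apply the realizable VC bound coordinate-wise to each $\F_{i,s}$, use Lemma~\ref{lemma: efficient_erm} for realizability and efficiency, and union-bound over the $NS$ coordinates. You add a bit more detail than the paper (the explicit loss decomposition and the remark about shared parameters), but the skeleton is identical. If the intended target was Theorem~\ref{theorem: final}, you are done; if it was truly Theorem~\ref{theorem:samplecomplexity}, you would need the standard VC machinery (Sauer's lemma, uniform convergence via the growth function, and a lower bound construction), none of which appears in your proposal.
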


Based on Theorem \ref{theorem:samplecomplexity}, the function class $\mathcal{F}_{i,s}$ is PAC learnable by the ERM scheme with a sample complexity of $$O(\frac{\VCdim(\mathcal{F}_{i,s})\log(1/\epsilon) + \log(1/\delta)}{\epsilon}).$$ Taking a union bound over $i \in [N]$ and $s \in S$, the sample complexity for the entire class $\F$ is 
$$O(\frac{\VCdim(\mathcal{F}_{i,s})\log(1/\epsilon) + \log(NS/\delta)}{\epsilon}).$$

\section{Remark \ref{remark: special_case}}
\label{subsec: remark_special_case}
For the scenario where there are two cascades, the coding-recovery scheme for the cascade identity is no longer needed, and the one-step simulation can be implemented by two layers of threshold functions, where the first layer is identical to Equation \ref{eq: H1_H2} and the second layer is a perceptron function, as illustrated in Figure \ref{fig:cascades2}. Therefore, following the similar analysis in Lemma \ref{lemma:VC}, the VC dimension of $\F_{i,s}$ is $\Tilde{O}(M+N)$, without depending on the system precision. 

\begin{figure}[!t]
    \centering
    \includegraphics[width = 0.47\textwidth]{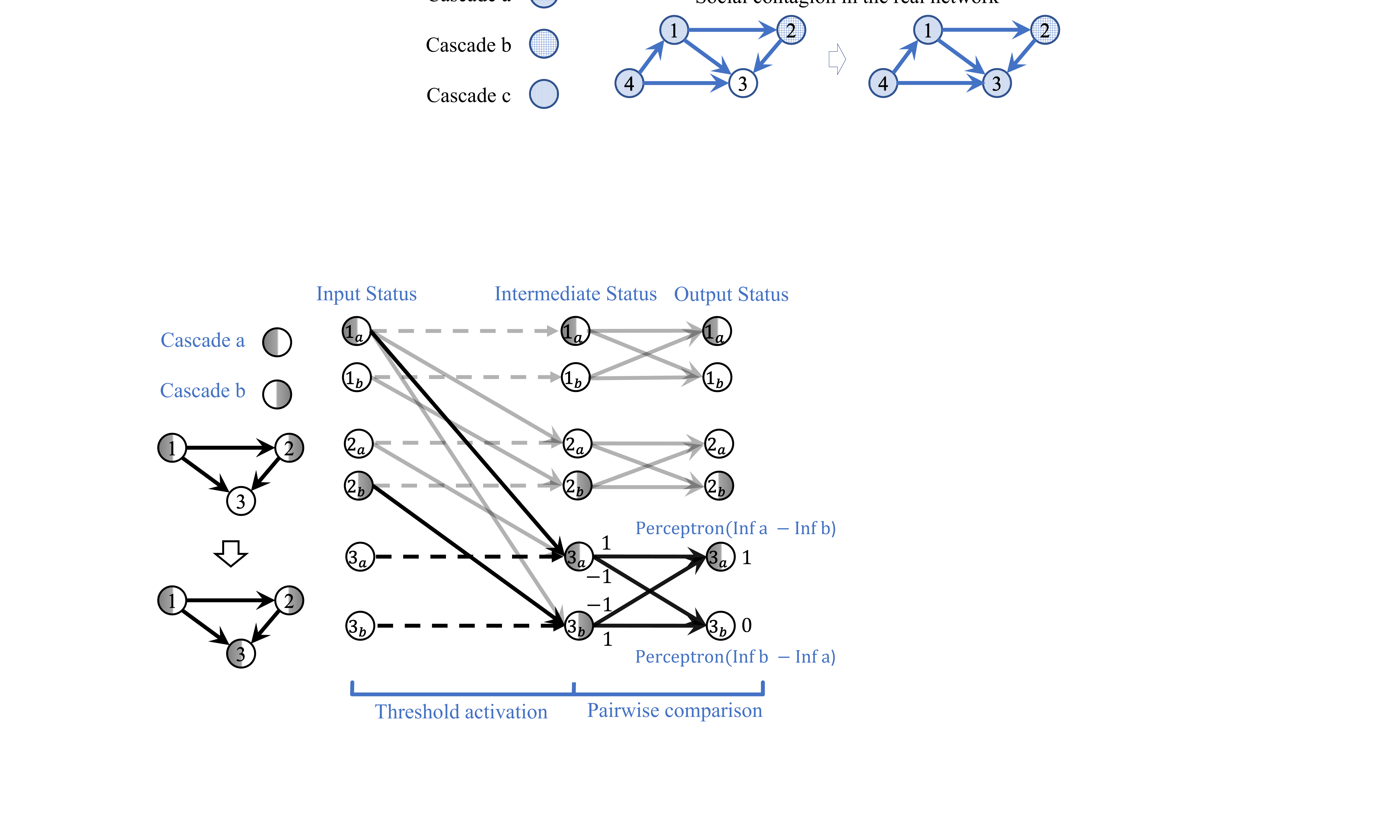}
    \caption{Design for $S=2$.}
    \label{fig:cascades2}
\end{figure}

\section{Additional materials for experiments}{\label{subsec: exp_more}}
The Kronecker graph comes with 1024 nodes and 2655 edges; the \text{power-law} graph has 768 nodes and 2655 edge; the Higgs graph consists of 4632 nodes and 3000 edges. The additional experimental results are given in Tables \ref{tab:kro_234}, \ref{tab: pl_234}, \ref{tab: kro_full} and \ref{tab: pl-full}. The source code, the data regarding the instances (i.e., Kronecker, power-law, and Higgs), and the generated samples are provided in the supplementary materials.

\begin{table}[!t]
\centering
\renewcommand{\arraystretch}{1.2} 
\begin{tabular}{@{} c @{} c   c @{\hspace{2mm}} c @{\hspace{2mm}}  c @{\hspace{2mm}} c    @{}}
& &   \textbf{F1 Score} & \textbf{Precision }&  \textbf{Accuracy} & \textbf{Time ({\tiny min})} \\ 
\midrule
\multicolumn{1}{c}{\multirow{3}{*}{\textbf{\makecell[c]{Our \\ Method}}}} & \textbf{2}  & 0.996{\tiny(3E-4)}&	0.996{\tiny(2E-4)} &	0.995{\tiny(8E-4)} & 2.471
	\\
& \textbf{3} & 0.994{\tiny (1E-3)} & 0.995{\tiny (1E-4)} & 0.993{\tiny (1E-4)} & 3.527
 \\ 
& \textbf{4}  & 0.993{\tiny(3E-4)} &	0.995{\tiny(1E-4)} &	0.994{\tiny(5E-4)} & 4.205
 \\ 
\midrule
\multicolumn{1}{c}{\multirow{3}{*}{\textbf{LR}}} & \textbf{2} & 	0.56{\tiny(3E-3)}& 0.748{\tiny(7E-3)} &	0.717{\tiny(4E-3)}& 375.371
	 \\
& \textbf{3} & 0.909{\tiny (1E-3)} & 0.975{\tiny (1E-4)} &  0.934{\tiny (1E-4)} & 372.102
 \\ 
& \textbf{4} & 0.446{\tiny(2E-3)} &	0.716{\tiny(6E-3)}	& 0.709{\tiny(5E-3)} & 370.016
\\
\midrule
\multicolumn{1}{c}{\multirow{3}{*}{\textbf{SVM}}} & \textbf{2 }
& 0.576{\tiny(1E-3)} &	0.751{\tiny(4E-3)}	& 0.724{\tiny(1E-3)}& 382.598
	\\
& \textbf{3}  & 0.465{\tiny (3E-3)} & 0.818{\tiny (6E-3)} &  0.698{\tiny (6E-3)} & 390.132
 \\ 
& \textbf{4} & 0.473{\tiny(2E-3)} &	0.722{\tiny(8E-3)} &	0.717{\tiny(3E-3)}	& 402.579
\\
\midrule
\multicolumn{1}{c}{\multirow{3}{*}{\textbf{MLP}}} & \textbf{2} 
& 0.566{\tiny(1E-3)} &	0.672{\tiny(3E-3)} &	0.709{\tiny(3E-3)}& 183.860
	\\
& \textbf{3}  & 0.909{\tiny (1E-3)} & 0.975{\tiny (1E-4)} &  0.934{\tiny (1E-4)} & 126.761
 \\ 
& \textbf{4} & 0.456{\tiny(1E-3)} &	0.57{\tiny(5E-3)} &	0.691{\tiny(5E-3)}& 140.503
	\\

\end{tabular}%
\caption{The results on Kronecker graph with different numbers of the cascades using 100 training samples.}
\label{tab:kro_234}
\end{table}

\begin{table}[!h]
\centering
\renewcommand{\arraystretch}{1.2} 
\begin{tabular}{@{} c @{} c   c @{\hspace{2mm}} c @{\hspace{2mm}}  c @{\hspace{2mm}} c  @{}}
& &   \textbf{F1 Score} & \textbf{Precision }&  \textbf{Accuracy }& \textbf{Time ({\tiny min})} \\ 

\midrule
\multicolumn{1}{c}{\multirow{3}{*}{\textbf{\makecell[c]{Our \\ Method}}}} & \textbf{2} & 0.990{\tiny(4E-4)} &	0.994{\tiny(4E-4)} &	0.990{\tiny(1E-3)} &  2.137

	\\
& \textbf{3} & 0.991\tiny (1E-3)	& 0.993\tiny (1E-4) &	0.902\tiny (1E-3) & 3.102 
 \\ 
& \textbf{4} & 0.983{\tiny(5E-4)}&	0.992{\tiny(2E-4)} &	0.990{\tiny(4E-4)} &	3.450 
 \\ 

\midrule

\multicolumn{1}{c}{\multirow{3}{*}{\textbf{LR}}} & \textbf{2} & 0.556{\tiny(3E-3)} &	0.691{\tiny(9E-3)} &	0.708{\tiny(4E-3)}  & 343.654
	 \\
& \textbf{3} & 0.909\tiny (1E-4) &	0.975\tiny (1E-4) &	0.934\tiny (1E-3) &  339.708 
 \\ 
& \textbf{4} & 0.436{\tiny(2E-3)} &	0.624{\tiny(8E-3)}&	0.698{\tiny(4E-3)} & 337.804
\\

\midrule

\multicolumn{1}{c}{\multirow{3}{*}{\textbf{SVM}}} & \textbf{2} & 0.568{\tiny(2E-3)}&	0.703{\tiny(1E-2)}&	0.716{\tiny(3E-3)} &343.504
	\\
& \textbf{3} &  0.465\tiny (4E-3) &	0.757\tiny (2E-2) &	0.692\tiny (5E-3) & 350.709 
 \\ 
& \textbf{4} &0.462{\tiny(3E-3)} &	0.663{\tiny(1E-2)}&	0.707{\tiny(6E-3)} & 354.969
\\
\midrule

\multicolumn{1}{c}{\multirow{2}{*}{\textbf{MLP}}} & \textbf{2} &0.560{\tiny(2E-3)} &	0.638{\tiny(7E-3)} &	0.700{\tiny(4E-3)} & {161.116}
	\\
& \textbf{3} & 0.909\tiny (1E-4) &	0.975\tiny (1E-4) &	0.934\tiny (1E-3) & 168.902 
 \\ 
& \textbf{4} &0.440{\tiny(1E-3)} &	0.530{\tiny(5E-3)} &	0.677{\tiny(4E-3)} &177.176
	\\


\end{tabular}%
\caption{The results on power-law graph with different numbers of the cascades using 100 training samples.}
\label{tab: pl_234}
\end{table}


\begin{table*}[t]
\renewcommand{\arraystretch}{1.2} 
\begin{tabular}{@{} c @{} c   c @{\hspace{2mm}} c @{\hspace{2mm}} c @{\hspace{5mm}} c @{\hspace{2mm}} c @{\hspace{2mm}}c  @{\hspace{5mm}} c  @{\hspace{2mm}} c  @{\hspace{2mm}} c @{}}
& & \multicolumn{3}{c}{\textbf{50}} & \multicolumn{3}{c}{\textbf{100}} & \multicolumn{3}{c}{\textbf{500}} \\ 
&  & \textbf{F1 Score} & \textbf{Precision} &  \textbf{Accuracy} & \textbf{F1 Score} & \textbf{Precision} & \textbf{ Accuracy } & \textbf{F1 Score} & \textbf{Precision} & \textbf{ Accuracy }\\ 

\midrule

\multicolumn{2}{c}{\textbf{Our Method}}  & 0.991\tiny (1E-3)	& 0.993\tiny (1E-4) &	0.902\tiny (1E-3) & 0.994{\tiny (1E-3)} & 0.995{\tiny (1E-4)} & 0.993{\tiny (1E-4)}  & 1.000\tiny (1E-4) & 0.999\tiny (1E-4) &  0.999\tiny (1E-4) \\ 

\midrule

\multirow{5}{*}{\textbf{LR}} & \textbf{1} & 0.488	\tiny (2E-3) &	0.622\tiny (2E-3) &		0.695\tiny (2E-3) &   0.499{\tiny (1E-3)} & 0.676{\tiny (2E-3)} &  0.709{\tiny (2E-3)}  & 0.509\tiny (1E-3) & 0.732\tiny (1E-3) &  0.719\tiny (4E-3) \\

& \textbf{2} & 0.471\tiny (2E-3) &	0.668\tiny (1E-2)&  0.697\tiny (4E-3) &  0.474{\tiny (1E-3)} & 0.752{\tiny (1E-2)} &  0.705{\tiny (4E-3)} & 0.473\tiny (2E-3) & 0.811\tiny (4E-3) &  0.708\tiny (3E-3) \\

& \textbf{3} & 0.460\tiny (1E-3) &	0.713\tiny (2E-2)&	0.697\tiny (4E-3) & 0.460{\tiny (3E-3)} & 0.802{\tiny (1E-2)}& 0.705{\tiny (6E-3)} & 0.454\tiny (1E-3) & 0.852\tiny (3E-3) &  0.703\tiny (5E-3) \\

& \textbf{4} &  0.458\tiny (4E-3)&	0.743\tiny (2E-2)&	0.698\tiny 61E-3) &  0.452{\tiny (3E-3)} & 0.827{\tiny (5E-3)} &  0.701{\tiny (4E-3)} & 0.449\tiny (1E-3) & 0.867\tiny (4E-3) & 0.703\tiny (5E-3) \\

& \textbf{5} & 0.454	\tiny (5E-3)&	0.772\tiny (2E-2)&	0.699\tiny (4E-3) &  0.451{\tiny (3E-3)} & 0.832{\tiny (6E-3)} & 0.701{\tiny (4E-3)}  & 0.446\tiny (2E-3) & 0.876\tiny (3E-3) &  0.703\tiny (3E-3) \\ 

\midrule

\multirow{5}{*}{\textbf{SVM}} & \textbf{1} &0.537\tiny (1E-3)&	0.663	\tiny (8E-3)&		0.716\tiny (4E-3) & 0.556{\tiny (2E-3)} & 0.716{\tiny (2E-3)} &  0.727{\tiny (3E-3)}  & 0.583\tiny (1E-3) & 0.797\tiny (2E-3)2 &  0.747\tiny (3E-3) \\

& \textbf{2} & 0.488\tiny (1E-3) &	0.674\tiny (5E-3) &	0.695\tiny (2E-3) &   0.496{\tiny (1E-3)} & 0.731{\tiny (2E-3)} &  0.705{\tiny (4E-3)} & 0.504\tiny (2E-3) & 0.805\tiny (4E-3) &  0.713\tiny (6E-3) \\

& \textbf{3} & 0.476\tiny (3E-3) &	0.705\tiny (1E-2) &	0.693\tiny (2E-3) &  0.480{\tiny (3E-3)} & 0.770{\tiny (9E-3)} &  0.702{\tiny (5E-3)}  & 0.483\tiny (2E-3) & 0.834\tiny (1E-3) &  0.706\tiny (4E-3) \\

& \textbf{4} & 0.471\tiny 4E-3) &	0.734\tiny (2E-2) &	0.696\tiny (2E-3) &  0.470{\tiny (4E-3)}& 0.798{\tiny (9E-3)} &  0.697{\tiny (3E-3)}  & 0.473\tiny (1E-3) & 0.855\tiny (2E-3) &  0.701\tiny (4E-3) \\

& \textbf{5} & 0.465\tiny (4E-3) &	0.757\tiny (2E-2) &	0.692\tiny (5E-3) &  0.465{\tiny (3E-3)} & 0.818{\tiny (6E-3)} &  0.698{\tiny (6E-3)}& 0.468\tiny (3E-3) & 0.863\tiny (2E-3) &  0.700\tiny (6E-3) \\ 

\midrule

\multirow{5}{*}{\textbf{MLP}} & \textbf{1} & 0.909\tiny (1E-3)&	0.975\tiny (1E-4)&	0.935\tiny (1E-3) &  0.909{\tiny (1E-3)} & 0.975{\tiny (1E-4)} &  0.935{\tiny (1E-4)} & 0.909\tiny (1E-3) & 0.975\tiny (1E-4) & 0.935\tiny (1E-4) \\

& \textbf{2} & 0.909\tiny (1E-3) &	0.975\tiny (1E-3) &	0.934\tiny (1E-3) &  0.909{\tiny (1E-3)} & 0.975{\tiny (1E-4)} & 0.934{\tiny (1E-3)} & 0.908\tiny (1E-3) & 0.975\tiny (1E-4) & 0.935\tiny (1E-3) \\

& \textbf{3} & 0.909\tiny (1E-3) &	0.975\tiny (1E-4) &	0.934\tiny (1E-3) & 0.909{\tiny (1E-3)} & 0.975{\tiny (1E-4)} & 0.935{\tiny (1E-3)}  & 0.909\tiny (1E-4) & 0.975\tiny (1E-4) & 0.934\tiny (1E-3) \\

& \textbf{4} & 0.909\tiny (1E-4) &	0.975\tiny (1E-4) &	0.935\tiny (1E-3) &  0.909{\tiny (1E-3)} & 0.975{\tiny (1E-3)} & 0.935{\tiny (1E-3)} & 0.909\tiny (1E-3) & 0.975\tiny (1E-4) & 0.934\tiny (1E-3) \\

& \textbf{5} & 0.909\tiny (1E-4) &	0.975\tiny (1E-4) &	0.934\tiny (1E-3) &  0.909{\tiny (1E-3)} & 0.975{\tiny (1E-4)} &  0.934{\tiny (1E-4)} & 0.909\tiny (1E-4) & 0.975\tiny (1E-4) & 0.935\tiny (1E-3) \\ 

\midrule

\multicolumn{2}{c}{\textbf{Random}}  & 0.208\tiny (2E-3)	& 0.250\tiny (2E-3) &	0.250\tiny (3E-3) &	 0.209{\tiny (1E-2)} & 0.250{\tiny (1E-2)} & 0.250{\tiny (1E-2)} & 0.210\tiny (1E-3) & 0.250\tiny (1E-3)  & 0.250\tiny (1E-3) \\ 
\end{tabular}%

\caption{The results on Kronecker graph with three cascades and different training sizes.}
\label{tab: kro_full}
\end{table*}

\begin{table*}[h]
\renewcommand{\arraystretch}{1.2} 
\begin{tabular}{@{} c @{} c   c @{\hspace{2mm}} c @{\hspace{2mm}} c @{\hspace{5mm}} c @{\hspace{2mm}} c @{\hspace{2mm}}c  @{\hspace{5mm}} c  @{\hspace{2mm}} c  @{\hspace{2mm}} c @{}}
& & \multicolumn{3}{c}{\textbf{50}} & \multicolumn{3}{c}{\textbf{100}} & \multicolumn{3}{c}{\textbf{500}} \\ 
&  & \textbf{F1 Score} & \textbf{Precision} &  \textbf{Accuracy} & \textbf{F1 Score} & \textbf{Precision} & \textbf{ Accuracy } & \textbf{F1 Score} & \textbf{Precision} & \textbf{ Accuracy }\\ 

\midrule

\multicolumn{2}{c}{\textbf{Our Method}} &  0.987\tiny (1E-3)&	0.993\tiny (1E-4)&	0.990\tiny (1E-3)&	0.994\tiny (1E-3)&	0.99\tiny (1E-4)&	0.994\tiny (1E-4)&	1.000\tiny (1E-4)&	0.998\tiny (1E-4)&	0.999\tiny (1E-4)\\

\midrule

\multirow{5}{*}{\textbf{LR}} & \textbf{1} &  0.207\tiny (1E-3) &	0.295\tiny (3E-3)&		0.606\tiny (4E-3)&	0.207\tiny (2E-3)&	0.291\tiny (9E-3)&		0.618\tiny (1E-2)&	0.207\tiny (3E-3)&	0.288\tiny (8E-3)&		0.619\tiny (3E-3)\\
& \textbf{2} &  0.209\tiny (4E-3)&	0.291\tiny (3E-3)&		0.601\tiny (1E-4)0&	0.208\tiny (5E-3)&	0.283\tiny (5E-3)&		0.609\tiny (2E-3)&	0.205\tiny (1E-3)&	0.293\tiny (4E-3)&	0.609\tiny (9E-3)	\\
& \textbf{3} & 0.211\tiny (2E-3)&	0.289\tiny (7E-3)&		0.613\tiny (9E-3)&	0.209\tiny (1E-3)&	0.293\tiny (1E-2)&		0.609\tiny (6E-3)&	0.206\tiny (2E-3)&	0.291\tiny (1E-2)&	0.613\tiny (2E-3)	\\
& \textbf{4} & 0.211\tiny (2E-3)&	0.304\tiny (3E-3)&	0.612\tiny (4E-3)&	0.205\tiny (4E-3)&	0.294\tiny (9E-3)&		0.611\tiny (1E-2)&	0.206\tiny (1E-4)&	0.293\tiny (3E-3)&		0.609\tiny (9E-3)	\\
& \textbf{5} & 0.208\tiny (6E-3)&	0.287\tiny (1E-2)&		0.604\tiny (3E-3)&	0.206\tiny (1E-4)&	0.284\tiny (9E-3)&		0.608\tiny (1E-3)&	0.206\tiny (1E-4)&	0.285\tiny (7E-3)&	0.611\tiny (6E-3)\\

\midrule

\multirow{5}{*}{\textbf{SVM}} & \textbf{1} & 0.231\tiny (4E-3)&	0.279\tiny (3E-3)&	0.617\tiny (1E-3)&	0.230\tiny (2E-3)&	0.281\tiny (1E-2)&	0.622\tiny (3E-3)&	0.228\tiny (2E-3)&	0.257\tiny (2E-3)&	0.626\tiny (7E-3)\\
& \textbf{2} & 0.233\tiny (2E-3)&	0.290\tiny 9E-3)&	0.611\tiny (2E-3)&	0.231\tiny (2E-3)&	0.286\tiny (5E-3)&	0.618\tiny (1E-3)&	0.229\tiny (1E-3)&	0.258\tiny (2E-3)&	0.624\tiny (1E-3)\\
& \textbf{3} & 0.230\tiny (5E-3)&	0.275\tiny (7E-3)&  0.626\tiny (5E-3)&	0.226\tiny (2E-3)&	0.272\tiny (3E-3)&	0.617\tiny (6E-3)&	0.229\tiny (3E-3)&	0.263\tiny (1E-3)&	0.620\tiny (1E-2)\\
& \textbf{4} & 0.229\tiny (1E-3)&	0.283\tiny (4E-3)&	0.616\tiny (6E-3)&	0.230\tiny (1E-4)&	0.278\tiny (4E-3)&	0.617\tiny (1E-3)&	0.231\tiny (1E-4)&	0.268\tiny (4E-3)&	0.623\tiny (5E-3)\\
& \textbf{5} & 0.232\tiny (1E-3)&	0.294\tiny (1E-2)&	0.620\tiny (5E-3)&	0.231\tiny (1E-3)&	0.280\tiny (1E-3)&	0.620\tiny (1E-2)&	0.230\tiny (1E-3)&	0.265\tiny (4E-3)&	0.619\tiny (3E-3)\\

\midrule

\multirow{5}{*}{\textbf{MLP}} & \textbf{1} & 0.235\tiny (1E-4)&	0.273\tiny (8E-3)&	0.613\tiny (5E-3)&	0.230\tiny (1E-3)&	0.272\tiny (6E-3)&	0.621\tiny (1E-4)&	0.231\tiny (4E-3)&	0.268\tiny (1E-2)&	0.621\tiny (3E-3)\\

& \textbf{2} & 0.233\tiny (2E-3)&	0.271\tiny (6E-3)&	0.609\tiny (6E-3)&	0.233\tiny (2E-3)&	0.286\tiny (2E-3)&	0.628\tiny (1E-2)&	0.231\tiny (4E-3)&	0.269\tiny (1E-3)&	0.629\tiny (3E-3)\\

& \textbf{3} & 0.234\tiny (2E-3)&	0.270\tiny (1E-3)&	0.611\tiny (1E-4)&	0.230\tiny (3E-3)&	0.272\tiny (1E-2)&	0.622\tiny (7E-3)&	0.230\tiny (2E-3)&	0.273\tiny (1E-4)&	0.623\tiny (4E-3)\\

& \textbf{4} & 0.231\tiny (1E-3)&	0.264\tiny (1E-4)&	0.614\tiny (1E-2)&	0.227\tiny (5E-3)&	0.274\tiny (4E-3)&	0.617\tiny (2E-3)&	0.229\tiny (2E-3)&	0.284\tiny (1E-2)&	0.621\tiny (1E-3)\\

& \textbf{5} & 0.234\tiny (3E-3)&	0.268\tiny (4E-3)&	0.608\tiny (1E-2)&	0.230\tiny (1E-3)&	0.280\tiny (1E-2)&	0.619\tiny (4E-3)&	0.228\tiny (1E-3)&	0.261\tiny (2E-3)&	0.620\tiny (4E-3)\\

\midrule

\multicolumn{2}{c}{\textbf{Random}}  &  0.217\tiny (1E-3)&	0.250\tiny (2E-3)&	0.249\tiny (1E-3)&	0.214\tiny (2E-3)2&	0.251\tiny (2E-3)&	0.251\tiny (1E-3)&	0.213\tiny (1E-4)&	0.25\tiny (1E-4)&	0.250\tiny (1E-4)\\

\end{tabular}%

\caption{The results on Power-law graph with three cascades and different training sizes.}
\label{tab: pl-full}
\end{table*}

\vfill
\clearpage

\end{document}